\begin{document}

\title*{Non-crossing Dependencies: Least Effort, not Grammar}
\author{Ramon Ferrer-i-Cancho}
\institute{Ramon Ferrer-i-Cancho \at Complexity and Quantitative Linguistics Lab, LARCA Research Group. 
Departament de Ci\`encies de la Computaci\'o, Universitat Polit{\`e}cnica de Catalunya (UPC).  Campus Nord, Edifici Omega, Jordi Girona Salgado 1-3. 08034 Barcelona, Catalonia (Spain), \email{rferrericancho@cs.upc.edu}.}
%
%
\maketitle

\abstract{The use of null hypotheses (in a statistical sense) is common in hard sciences but not in theoretical linguistics.
Here the null hypothesis that the low frequency of syntactic dependency crossings is expected by an arbitrary ordering of words is rejected. It is shown that this would require star dependency structures, which are both unrealistic and too restrictive. The hypothesis of the limited resources of the human brain
is revisited. Stronger null hypotheses taking into account actual dependency lengths for the likelihood of crossings are presented. Those hypotheses suggests that crossings are likely to reduce when dependencies are shortened. 
A hypothesis based on pressure to reduce dependency lengths is more parsimonious than a principle of minimization of crossings or a grammatical ban that is totally dissociated from the general and non-linguistic principle of economy.
}

\section{Introduction}

\begin{figure}
\includegraphics[scale = 0.75]{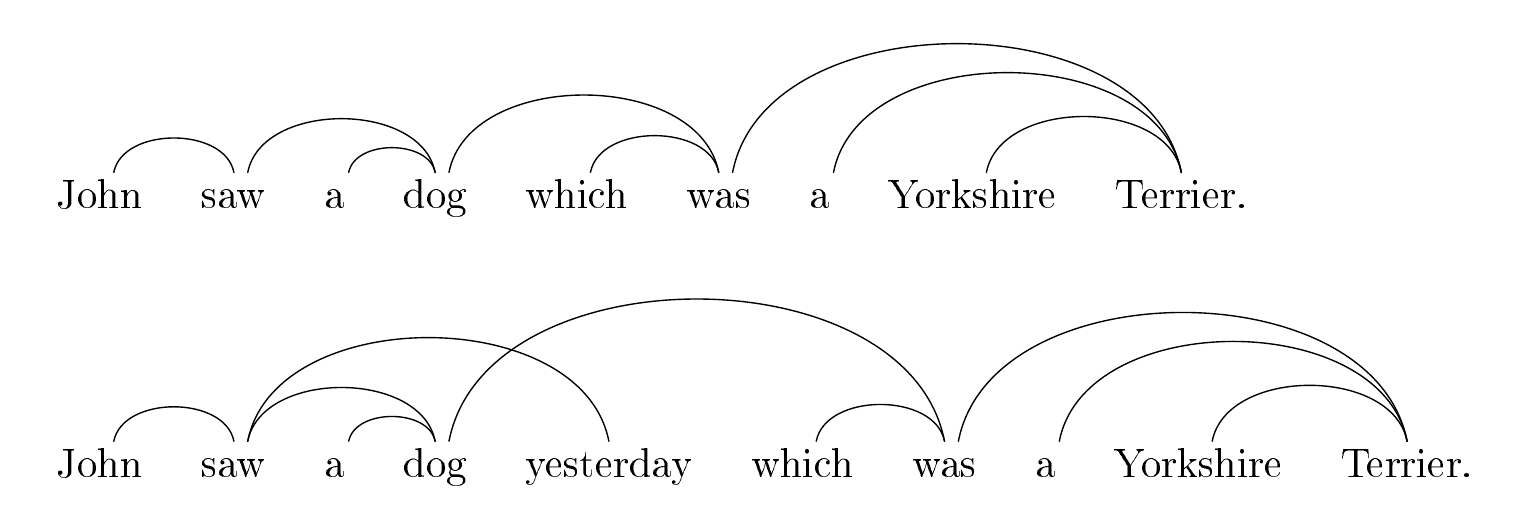}
\caption{\label{real_sentences_figure} {\bf Top} an English sentence without crossings. {\bf Bottom} a variant of the previous sentence with one dependency crossing (the dependency between "saw" and "yesterday" crosses the dependency between "dog" and "was" and vice versa). Adapted from \cite{McDonald2005a}. }
\end{figure}


A substantial subset of theoretical frameworks under the general umbrella of "generative grammar" or "generative linguistics" have been kidnapped by the idea that a deep theory of syntax requires that one neglects the statistical properties of the system \cite{Miller1963} and abstracts away from functional factors such as the limited resources of the brain \cite{Chomsky1965}. 

This radical assumption disguised as intelligent abstraction led to the distinction between competence and performance (see \cite[Sect. 2.4]{Jackendoff2002a} for a historical perspective from generative grammar), a dichotomy that is sometimes regarded as a soft methodological division \cite[pp. 34]{Jackendoff2002a} or as theoretically unmotivated \cite{Newmeyer2001a}. A sister radical dichotomy is the division between grammar and usage \cite{Newmeyer2003a}. A revision of those views has led to the  proposal of competence-plus, \emph{"a package consisting of the familiar recursive statements of grammatical rules,
plus a set of numerical constraints on the products of those rules"} \cite{Hurford2012_Chapter3}.
Interestingly, certain approaches reconcile competence with performance by regarding  grammar as a store of "frozen" or "fixed" performance preferences \cite[p. 3]{Hawkins2004a} or by opening the set of numerical constraints of competence-plus to performance factors \cite{Hurford2012_Chapter3}.
Other examples of approaches that reject the dichotomous view of language are emergent grammar \cite{Hopper1998a}, synergetic linguistics \cite{Koehler2005a} or probabilistic syntax \cite{Manning2002a}.
The challenges of the competence/performance are not specific to generative linguistics. For instance, \emph{"the competence/performance distinction is also embodied in many symbolic
models of language processing"} \cite{Christiansen1999a} and integrated with some refinements in language evolution research \cite{Hurford2012_Chapter3}.  

Again from the perspective of standard model selection \cite{Burnham2002a}, the competence/\-performance dichotomy, even in soft versions, has a serious risk: if a more parsimonious theory exists based on performance, one that has the same or even superior explanatory power, it  may not be discovered and if so, it will not be sufficiently endorsed. Astonishingly, linguistic theories that belittle the role of the limited resources of the human brain for structural constraints of syntax are presented as minimalistic (e.g., \cite{Chomsky1995}). In contrast, standard model selection favors theories with a good compromise between simplicity (often coming from a suitable abstraction or idealization) and explanatory power \cite{Burnham2002a}.

A follower of the competence-performance split may consider that the opponents are unable to think in sufficiently abstract terms: opponents are being side-tracked by actual language use and limited computational resources and do not focus on the essence of syntax (in those views the essence of syntax is grammar \cite{Newmeyer2003a} or certain features of such as recursion \cite{Hauser2002}; in other approaches the essence of syntax is not grammar but dependencies \cite{Hudson2007a,Frank2012a}). 
The proponents of the split doctrine have not hesitated either to advertise functional approaches to linguistic theory as wrong \cite{Miller1968a} or to attempt to dismantle attempts to turn research on language or communication more quantitative (e.g., \cite{Miller1963,Niyogi1995a,Suzuki2004a}). A real scientist however, will ask for the quality of a theory or a hypothesis in terms of the accuracy of its definitions, its testability, the statistical analyzes that have been performed to support it, the null hypotheses, the trade-off between explanatory power and parsimony of the theory, and so on. 

If the limited resources of the brain are denied, one might be forced to blame grammar for the occurrence of certain patterns. Using standard model selection terms \cite{Burnham2002a}, forwarding the responsibility to grammar implies the addition of more parameters to the model, indeed unnecessary parameters, as it will be shown here through a concrete phenomenon. The focus of the current article is a striking pattern of syntactic dependency trees of sentences that was reported in the 1960s: dependencies between words normally do not cross when drawn over the sentence \cite{Lecerf1960a,Hays1964} (e.g., Fig. \ref{real_sentences_figure}).
The problem of dependency crossings looks purely linguistic but it goes beyond human language: crossings have also been investigated in dependency networks where vertices are occurrences of nucleotides $A$, $G$, $U$, and $C$ and edges are $U$-$G$ and Watson-Crick base pairs, i.e. $A$-$U$, $G$-$C$ \cite{Chen2009a}. Having in mind various domains of application helps a researcher to apply the right level of abstraction. Becoming a specialist in human language or certain linguistic phenomena helps to find locally optimal theories, causes the illusion of scientific success when becoming the world expert of a certain topic but does not necessarily produce compact, coherent, general and elegant theories.

Here new light is shed on the origins of non-crossing dependencies by means of two fundamental tools of the scientific method: null hypotheses and unrestricted parsimony (unrestricted parsimony in the sense of being a priori open to favor theories that make fewer assumptions; not in the sense that  parsimony has to be favored neglecting explanatory power). Unfortunately, the definition of null hypotheses (in a statistical sense) is rare in theoretical linguistics (although it is fundamental in biology or medicine). Even in the context of quantitative linguistics research, clearly defined null hypotheses or baselines are present in certain investigations, e.g., \cite{Ferrer2012d,Ferrer2004b} but are missing in others e.g., \cite{Ferrer2011c, Moscoso2013a}. When present, they are not always tested rigorously \cite{Ferrer2009b}. In the context of quantitative research, claims about the efficiency of language have been made lacking a measure of cost and evidence that such a cost is below chance \cite{Piantadosi2011a}. A deep theory of language requires (at least) metrics of efficiency, tests of their significance and an understanding of the relationship between the minimization of the costs that they define and the emergence of the target patterns, e.g., Zipf's law of abbreviation \cite{Ferrer2012d}.

To our knowledge, claims for the existence of a universal grammar have never been defended by means of a null hypothesis (in a statistical sense), e.g., \cite{Jackendoff2002a,Uriagereka1998},
and a baseline is missing in research where grammar is seen as a conventionalization of performance constraints \cite{Hawkins2004a} or in research where competence is complemented with quantitative constraints \cite{Hurford2012_Chapter3}.  As for the latter, baselines would help one to determine which of those constraints must be stored by grammar or competence-plus.

The first question that a syntactician should ask as a scientist when investigating the origins of a syntactic property $X$ is: could $X$ happen by chance? The question is equivalent to asking if grammar (in the sense of some extra knowledge) or specific genes are  needed to explain property $X$.
Accordingly, the major question that this article aims to answer is: could the low frequency of crossings in syntactic dependency trees be explained by chance, maybe involving general computational constraints of the human brain?

The remainder of the article is organized as follows. Sect. \ref{dependency_structure_section} reviews our minimalistic approach to the syntactic dependency structure of sentences. Sect. \ref{null_hypothesis_section} considers the null hypothesis of a random ordering of the words of the sentence and shows that keeping the expected number of crossings small requires unrealistic constraints on the ensemble of possible dependency trees (only star trees would be possible). Sect. \ref{alternative_hypotheses_section} considers alternative hypotheses, discarding the vague or heavy hypothesis of grammar and focusing on two major hypotheses: a principle of minimization of crossings and a principle of minimization of the sum of dependency lengths. The analysis suggests that the number of crossing and the sum of dependency lengths are not perfectly correlated but their correlation is strong. Of the two principles, dependency length minimization offers a more parsimonious account of many more linguistic phenomena. Interestingly, that principle is motivated by the need of minimizing cognitive effort.
A challenge for the hypothesis that the rather small number of crossings of real sentence is a side-effect of minimization of dependency lengths is (a) determining the degree of that minimization that the real number of crossings requires and (b) if that degree is realistic.     
Sect. \ref{stronger_null_hypothesis_section} presents a stronger null hypothesis that addresses the challenge with knowledge about edge lengths. That null hypothesis allows one to predict the number of crossings when the length of one of the edges potentially involved in a crossing is known but words are arranged at random. Thus, that predictor uses the actual dependency lengths to estimate the number of crossings. Interestingly, that predictor provides further support for a strong correlation between crossings and dependency lengths: analytical arguments suggest that it is likely that a reduction of dependency lengths causes a drop in the number of crossings. Sect. \ref{another_stronger_null_hypothesis_section} considers another predictor based on a stronger null hypothesis where the sum of dependency lengths is given but words are arranged at random. Preliminary numerical results indicate a strong correlation between the mean number of crossings and the sum of dependency lengths over all the possible orderings of the words of real sentences.  
Interestingly, that null hypothesis leads to a predictor that requires less information about a real sentence than the previous predictor (only the sum of dependency lengths is needed) and paves the way to understanding the rather low number of crossings in real sentences as a consequence of global cognitive constraints on dependency lengths. Sect. \ref{prediction_and_testing_section} compares the predictions of the three predictors on a small set of sentences. The results suggest that the predictor based on the sum of dependency lengths is the best candidate. There it is also demonstrated that p-value testing can be used to investigate the adequacy of the best candidate. Interestingly, the best candidate was not rejected in that sample of sentences. Finally, Sect. \ref{discussion_section} reviews and discusses the idea that least effort, not grammar, is the reason for the small number of crossings of real sentences. 

\section{The Syntactic Dependency Structure of Sentences}

\label{dependency_structure_section}

This article borrows the minimalistic approach to the syntactic dependency structure of sentences of dependency grammar \cite{Hudson1984,Melcuk1988} and recent progress in cognitive sciences \cite{Frank2012a}: 
\begin{itemize}
\item
No hierarchical phrase structure is assumed in the sense that the structure of a sentence is defined simply as a tree where vertices are words and edges are syntactic dependencies. This is a fundamental assumption of our approach: tentatively, the network defining the dependencies between words might be a disconnected forest or a graph with cycles (these are possibilities that have not been sufficiently investigated) \cite{Hudson1984}. A general theory of crossings in nature cannot obviate the fact that RNA structures cannot be modeled with trees but can be modeled with forests \cite{Chen2009a} \footnote{In those RNA structures, vertex degrees do not exceed one \cite{Chen2009a} and thus cycles are not possible but connectedness is not either (the handshaking lemma \cite[p. 4]{Bollobas1998} indicates that such a graph cannot have more than $n/2$ edges, being $n$ the number of vertices, and thus cannot be connected because that needs at least $n-1$ edges). }.
Although the choice of a tree of words as the reference model for sentence structure (e.g., \cite{McDonald2005a}) is to some extent arbitrary, a tree is optimal for being the kind of network that is able to connect all words with the smallest amount of edges \cite{Ferrer2003a}. 
\item
Words establish direct relationships that are not necessarily mediated by syntactic categories (non-terminals in the phrase structure formalism and generative grammar evolutions). This skepticism about syntactic categories (as entities by its own, not epiphenomena) goes beyond dependency grammar, e.g., construction grammar \cite{Goldberg2003a}.
\end{itemize}
Along the lines of \cite{Frank2012a}, link direction is irrelevant for the arguments in this article. Even within the dependency grammar formalism, dependencies are believed to be directed (from heads to modifiers/complements) \cite{Melcuk1988,Hudson1984}. A minimalistic approach to dependency syntax should not obviate the fact that the accuracy of dependency parsing improves if link direction is neglected \cite{GomezRodriguez2014a}. 

\section{The Null Hypothesis}

\label{null_hypothesis_section}

Let $n$ be the number of vertices of a tree. 
Let $k_i$ be the degree of the $i$-th vertex of a tree and $k_1,...,k_i,...,k_n$ its degree. By $K_\alpha$, we denote  
\begin{equation}
K_\alpha = \sum_{i=1}^n k_i^\alpha,
\end{equation}
where $\alpha$ is a natural number.
In a tree, $K_1$ only depends on $n$, i.e. \cite{Noy1998a}, 
\begin{equation}
K_1 = 2(n-1)
\label{sum_of_degrees_equation}
\end{equation}
and thus the 1st moment of degree is 
\begin{equation}
\left<k \right> = \frac{K_1}{n} = 2 - \frac{2}{n}.
\label{degree_1st_moment_equation}
\end{equation}

Let $E_0[C]$ be the expected number of crossings in a random linear arrangement of a dependency tree with a given degree sequence, i.e. \cite{Ferrer2013d}
\begin{equation}
E_0[C] = \frac{n}{6}\left(n-1-\left< k^2\right>\right),
\label{expected_crossings_equation}
\end{equation} 
where $\left< k^2\right>$ is the 2nd moment of degree, i.e. 
\begin{equation}
\left< k^2\right> = \frac{K_2}{n}.
\label{degree_2nd_moment_equation}
\end{equation}
Thus, the expected number of crossings depends on the number of vertices ($n$) and the 2nd moment of degree ($\left< k^2 \right>$). The higher the hubiness (the higher $\left< k^2 \right>$) the lower the expected number of crossings.

\begin{figure}
\sidecaption[t]
\includegraphics[scale = 0.75]{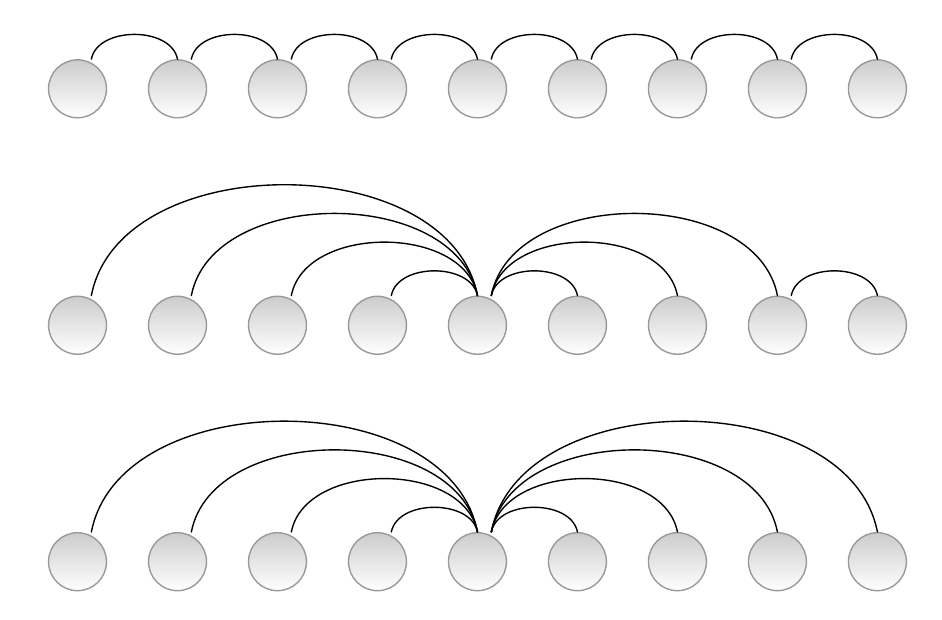}
\caption{\label{kinds_of_trees_figure} Linear arrangements of trees of nine vertices. {\bf Top} a linear tree.
{\bf Center} quasi-star tree. {\bf Bottom} star tree. }
\end{figure}

A star tree is a tree with a vertex of degree $n-1$ while a linear tree is a tree where no vertex degree exceeds two \cite{Ferrer2013d} (Fig. \ref{kinds_of_trees_figure}). For a given number of vertices, $E_0[C]$ is minimized by star trees, for which $E_0[C]=0$, whereas $E_0[C]$ is maximized by linear trees, for which \cite{Ferrer2013b,Ferrer2013d}
\begin{equation}
E_0[C] = \frac{1}{6}n(n - 5) + 1.
\end{equation}

As $E_0[C]$ depends on the $\left< k^2 \right>$ of the tree, the null hypothesis that the tree structures are chosen uniformly at random among all possible labeled trees is considered next.   
The Aldous-Brother algorithm allows one to generate uniformly random labeled spanning trees from a graph \cite{Aldous1990a,Broder1989a}. Here a complete graph is assumed to be the source for the spanning trees. 
A low number of crossings cannot be attributed to grammar if $E_0[C]$ is low.

$E_0[C]$ is the expectation of $C$ given a degree sequence. Indeed, that expectation can be obtained just from knowledge about $\left< k^2 \right>$ and $n$ (Eq.~\ref{expected_crossings_equation}).
The expectation of $C$ for uniformly random labeled trees is
\begin{proposition}
\begin{align}
E[E_0[C]] & = \frac{1}{6} (n-1)\left( n-5+\frac{6}{n} \right) \nonumber \\
          & = \frac{n^2}{6} - n + \frac{11}{6} - \frac{1}{n}.
\end{align}
\end{proposition}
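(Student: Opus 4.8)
The plan is to reduce the claim to the computation of a single quantity, the expected value $E[K_2]$ under the uniform distribution on labeled trees, and then to evaluate that expectation by exploiting the Pr\"ufer-code description of the degree distribution.

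First I would turn $E_0[C]$ into a linear function of $K_2$. Since $\left<k^2\right>=K_2/n$ by Eq.~(\ref{degree_2nd_moment_equation}), Eq.~(\ref{expected_crossings_equation}) can be written as
\[
E_0[C] = \frac{n(n-1)}{6} - \frac{1}{6}K_2 .
\]
By linearity of expectation this immediately gives
\[
E[E_0[C]] = \frac{n(n-1)}{6} - \frac{1}{6}\,E[K_2],
\]
so the whole problem collapses to finding $E[K_2]$ for uniformly random labeled trees.

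Second, I would compute $E[K_2]=\sum_{i=1}^n E[k_i^2] = n\,E[k_1^2]$, where the last equality follows from the symmetry of the uniform measure under relabeling of the vertices. To obtain the law of a single degree I would invoke the Pr\"ufer bijection between labeled trees on $n$ vertices and strings of length $n-2$ over the alphabet $\{1,\dots,n\}$: under the uniform measure each of the $n^{n-2}$ strings is equally likely, the coordinates are i.i.d.\ uniform on $\{1,\dots,n\}$, and the degree of vertex $i$ equals one plus the number of occurrences of the symbol $i$. Hence $k_1 = 1 + X$ with $X\sim\mathrm{Binomial}(n-2,1/n)$.

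Third, I would read off the binomial moments $E[X]=(n-2)/n$ and $E[X^2]=\mathrm{Var}(X)+(E[X])^2=(n-2)(n-1)/n^2+(n-2)^2/n^2$, so that $E[k_1^2]=1+2E[X]+E[X^2]$. Substituting and clearing denominators gives $E[K_2]=n\,E[k_1^2]=5n-11+6/n$, and inserting this into the expression for $E[E_0[C]]$ and collecting terms over the common denominator $6n$ yields
\[
E[E_0[C]] = \frac{1}{6}\left(n^2-6n+11-\frac{6}{n}\right)=\frac{1}{6}(n-1)\left(n-5+\frac{6}{n}\right),
\]
which is the stated identity in both of its forms. The only nontrivial ingredient is the degree distribution in a uniform labeled tree, i.e.\ the identification of $k_1-1$ with a binomial count through the Pr\"ufer correspondence; everything downstream is routine binomial algebra. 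An alternative route avoiding the explicit bijection would start from the classical count $(n-2)!/\prod_i(d_i-1)!$ of labeled trees with prescribed degree sequence and average $k_1^2$ directly, but it leads to the same binomial law with more effort, so I would favor the Pr\"ufer argument as the cleaner path.
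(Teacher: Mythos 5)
Your proof is correct, and it reaches the same intermediate quantity as the paper by a different route. The paper's proof is a two-line citation argument: it quotes the known degree variance of uniformly random labeled trees, $V[k]=\left(1-\frac{1}{n}\right)\left(1-\frac{2}{n}\right)$, from the literature, combines it with $\left<k\right>=2-\frac{2}{n}$ to get $\left<k^2\right>=\left(1-\frac{1}{n}\right)\left(5-\frac{6}{n}\right)$, and substitutes into Eq.~\ref{expected_crossings_equation} by linearity. You instead derive that second moment from first principles: by symmetry $E[K_2]=n\,E[k_1^2]$, and the Pr\"ufer bijection identifies $k_1-1$ with a $\mathrm{Binomial}(n-2,1/n)$ count, whence $E[K_2]=5n-11+\frac{6}{n}$, which agrees exactly with $n$ times the paper's $\left<k^2\right>$ (both equal $5n-11+\frac{6}{n}$), and the remaining algebra checks out. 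What your approach buys is self-containedness — you prove the key moment rather than importing it, and indeed the Pr\"ufer argument is essentially the standard proof of the variance formula the paper cites — at the cost of invoking the bijection and the i.i.d.\ structure of uniform Pr\"ufer strings. The paper's version is shorter but leans on an external result. One small presentational caveat: the symmetry step $E[K_2]=n\,E[k_1^2]$ deserves the explicit remark that the uniform measure on labeled trees is invariant under vertex relabeling (you do state this), since $K_2$ is an empirical sum over a random tree rather than $n$ copies of one fixed marginal; with that noted, the argument is complete.
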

\begin{proof}
On the one hand, the degree variance for uniformly random labeled trees is \cite{Moon1970a,Noy1998a} 
\begin{equation}
V[k] = \left<k^2 \right> - \left<k \right>^2 =  \left(1- \frac{1}{n}\right)\left(1-\frac{2}{n}\right).
\end{equation}
Applying Eq.~\ref{degree_1st_moment_equation}, it is obtained
\begin{equation}
\left<k^2 \right> = \left(1-\frac{1}{n}\right)\left(5 - \frac{6}{n}\right).
\label{degree_2nd_moment_null_hypothesis_equation}
\end{equation}
On the other hand, 
\begin{align}
E[E_0[C]] & = E\left[ \frac{n}{6}\left( n - 1 - \left< k^2 \right> \right) \right] \nonumber && \text{applying Eq.~\ref{expected_crossings_equation}} \\
        & = \frac{n}{6} \left( n-1-E \left[ \left< k^2 \right> \right] \right) \nonumber \\
        & = \frac{n}{6} \left( n-1- \left(1-\frac{1}{n}\right)\left(5 - \frac{6}{n}\right) \right) && \text{applying Eq.~\ref{degree_2nd_moment_null_hypothesis_equation}} \nonumber \\
        & = \frac{1}{6} (n-1)\left( n-5+\frac{6}{n} \right). \nonumber 
\end{align}
\qed
\end{proof}

Fig. \ref{expected_crossings_figure} shows that uniformly random labeled trees exhibit a high value of $E_0[C]$ that is near the upper bound defined by linear trees. 
Thus, it is unlikely that the rather low frequency of crossings in real syntactic dependency trees \cite{Melcuk1988,Liu2010a} is due to uniform sampling of the space of labeled trees. However, one cannot exclude the possibility that real dependency trees belong to a subclass of random trees for which $E_0[C]$ is low (e.g., the uniformly random trees may not be spanning trees of a complete graph). This possibility is explored next.

\begin{figure}
\includegraphics[scale = 0.5]{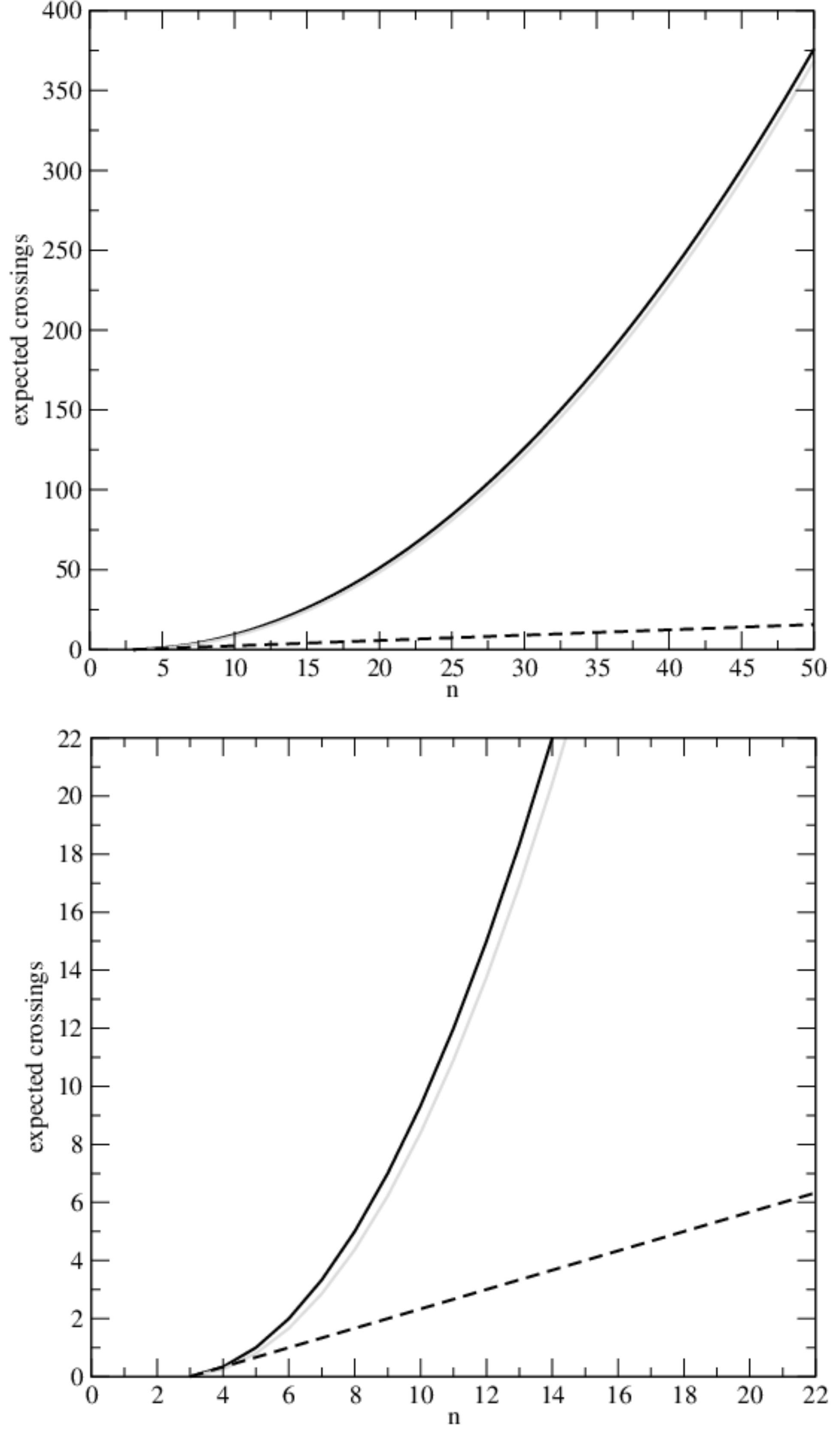}
\caption{
\label{expected_crossings_figure}
The expected number of crossings as a function of the number of vertices of the tree ($n$) in random linear arrangements of vertices for linear trees (\emph{black solid line}), uniformly random labeled trees (\emph{gray line}) and quasi-star trees (\emph{dashed line}). {\bf Top} the whole picture up to $n = 50$. {\bf Bottom} a zoom of the left-bottom corner.} 
\end{figure}

A quasi-star tree is defined as a tree with one vertex of degree $n - 2$, one vertex of degree $2$ and the remainder of vertices of degree $1$ (Fig. \ref{kinds_of_trees_figure}). A quasi-star tree needs $n \geq 3$ to exist (Appendix). 
The sum of squared degrees of such a tree is (Appendix)
\begin{equation}
K_2^\text{quasi} = n^2 - 3n + 6
\end{equation}
and thus the degree 2nd moment of a quasi-star tree is 
\begin{equation}
\left< k^2 \right>^\text{quasi} = \frac{K_2^\text{quasi}}{n} = n - 3 + \frac{6}{n}.
\label{2nd_moment_of_quasi_star_tree_equation}
\end{equation}
Eq.~\ref{expected_crossings_equation} and Eq.~\ref{2nd_moment_of_quasi_star_tree_equation} allow one to infer that 
\begin{equation}
E_0[C] = \frac{n}{3} - 1
\end{equation}
for a quasi-star tree. Fig. \ref{expected_crossings_figure} shows the linear growth of the expected number of crossings as a function of the number of vertices in quasi-star trees.
Interestingly, if a tree has a value of $\left< k^2 \right>$ that exceeds $\left< k^2 \right>^\text{quasi}$, it has to be a star tree (see Appendix). For this reason, Fig. \ref{expected_crossings_figure} suggests that star trees are the only option to obtain a small constant number of crossings. A detailed mathematical argument will be presented next.

If it is required that the expected number of crossings does not exceed $a$, i.e. $E_0[C]\leq a$, Eq.~\ref{expected_crossings_equation} gives
\begin{equation}
\left< k^2\right> \geq n - 1 - \frac{6a}{n}.
\label{minimum_2nd_moment_equation}
\end{equation} 
Notice that the preceding result has been derived making no assumption about the tree topology.
We aim to investigate when a $E_0[C] \leq a$ implies a star tree. 


As a tree whose value of $\left< k^2 \right>$ exceeds $\left< k^2 \right>^\text{quasi}$ must be a star tree (Appendix), Eq.~\ref{minimum_2nd_moment_equation} indicates that if
\begin{equation}
n - 1 - \frac{6a}{n} > \left< k^2\right>^\text{quasi}
\label{intermediate_equation}
\end{equation}
then $E_0[C] \leq a$ requires a star tree. Applying the definition of $\left< k^2\right>^\text{quasi}$ in Eq.~\ref{2nd_moment_of_quasi_star_tree_equation} to Eq.~\ref{intermediate_equation}, we obtain that a star tree is needed to expect at most $a$ crossings if 
\begin{equation}
n > 3a + 3.
\label{above_quasi_star_tree_equation}
\end{equation}
Thus, Eq.~\ref{above_quasi_star_tree_equation} implies that a hub tree is needed to expect at most one crossings by chance ($a=1$) for $n > 6$ (this can be checked with the help of Fig. \ref{expected_crossings_figure}).
In order to have at most one crossing by chance, the structural diversity must be minimum because star trees are the only possible labeled trees. To understand the heavy constraints imposed by $a=1$ on the possible trees, consider $t(n)$, the number of unlabeled trees of $n$ that can be formed (Table \ref{number_of_unlabelled_trees_table}). When $n=4$, the only trees than can be formed are a star tree and a linear tree, which gives $t(4) = 2$. In contrast, the star tree is only one out of 19320 possible unlabeled trees when $n=16$. The decrease in diversity is more radical as $n$ increases (Table \ref{number_of_unlabelled_trees_table}).
The choice of $a=2$ does not change the scenario so much: Eq.~\ref{above_quasi_star_tree_equation} predicts that sentences of length $n > 9$ should have a star tree structure if no more than two crossings are to be expected. Real syntactic dependency trees from sufficiently longer sentences are far from star trees, e.g., Fig. \ref{real_sentences_figure} \cite{Melcuk1988,Hays1964,Lecerf1960a}. 

\begin{table}
\caption{\label{number_of_unlabelled_trees_table} $t(n)$, the number of unlabeled trees of $n$ vertices \cite{OEIS_A000055}. 
} 
\begin{tabular}{p{0.7cm}p{1cm}}
\hline\noalign{\smallskip}
$n$ & $t(n)$ \\ 
\noalign{\smallskip}\svhline\noalign{\smallskip}
1 & 1 \\
2 & 1 \\ 
3 & 1 \\
4 & 2 \\
5 & 3 \\
6 & 6 \\
7 & 11 \\
8 & 23 \\
9 & 47 \\
10 & 106 \\
11 & 235 \\
12 & 551 \\
13 & 1301 \\
14 & 3159 \\
15 & 7741 \\ 
16 & 19320 \\
17 & 48629 \\
18 & 123867 \\
19 & 317955 \\
\noalign{\smallskip}\hline\noalign{\smallskip}
\end{tabular}
\end{table}

\section{Alternative Hypotheses}

\label{alternative_hypotheses_section}

It has been  shown that the low frequency of crossings is unexpected by chance in random linear arrangements of real syntactic dependency trees. As scientists, the next step is exploring the implications of this test and evaluating alternative hypotheses. 
Vertex degrees ($\left< k^2 \right>$), which are an aspect of sentence structure, have been discarded as the only origin for the low frequency of crossings. This is relevant for some views where 
competence or grammar concern the structure of a sentence \cite{Chomsky1965,Jackendoff2002a,Newmeyer2003a}.
Discussing what competence or grammar is or should be is beyond the scope of this article but it is worth examining common reactions of language researchers when encountering a pattern:
\begin{itemize}
\item
For statistical patterns such as Zipf's law for word frequencies and Menzerath's law, it was concluded that the patterns are inevitable \cite{Miller1963,Sole2010a} (see \cite{Ferrer2009b,Ferrer2012f} for a review of the weaknesses of such conclusions).
\item
Concerning syntactic regularities in general, a naive but widely adopted approach is blaming (universal) grammar, the faculty of language or similar concepts \cite{Newmeyer2003a,Hauser2002}. The fact that one is unable to explain a certain phenomenon through usage is considered as a justification for grammar (e.g., \cite{Newmeyer2003a}). 
However, a rigorous justification requires a proof of the impossibility of usage to account for the phenomenon. To our knowledge, that proof is never provided.
\item
In the context of dependency grammar, crossings dependencies have been banned \cite{Hudson1984} or it has been argued that most phrases cannot have crossings or that crossings turn sentences ungrammatical \cite[pp. 130]{Hudson2007a}. It is worrying that the statement is not the conclusion of a proof of the impossibility of a functional explanation. Furthermore, the argument of "ungrammaticality" is circular and sweeps processing difficulties under the carpet.
\end{itemize}
In the traditional view of grammar or the faculty of language, the limited resources of the human brain are secondary or anecdotal \cite{Hauser2002, Hudson2007a}. 
Recurring to grammar or a language faculty implies more assumptions, e.g. grammar would be the only reason why dependencies do not cross so often, and an explanation about the origins of the property would be left open (the explanation would be potentially incomplete). The property might have originated in grammar as a kind of inevitable logical or mathematical property, or might be supported by genetic information of our species, or it might also have been transferred to grammar (culturally or genetically) and so on. 
Thus, a grammar that is responsible for non-crossing dependencies would not be truly minimalistic (parsimonious) at all if the phenomenon could be explained by a universal principle of economy (universal in the sense of concerning the human brain not necessarily exclusively). This is likely the case of current approaches to dependency grammar at least (e.g. \cite{Hudson2007a,Melcuk1988,Hudson1984}.

\begin{figure}
\sidecaption[t]
\includegraphics[scale = 0.75]{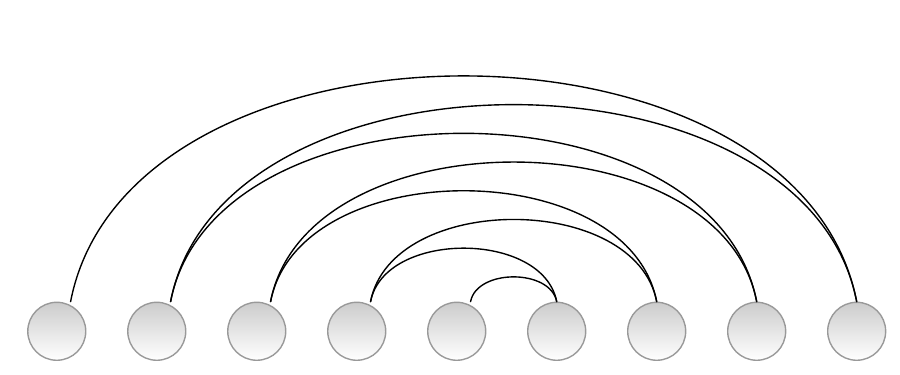}
\caption{
\label{maximal_non_crossing_linear_tree_figure}
A linear arrangements of the vertices of a linear tree that maximizes $D$ (the sum of dependency lengths) when edge crossings are not allowed. }
\end{figure}

\subsection{A Principle of Minimization of Dependency Crossings}

A tempting hypothesis is a principle of minimization of dependency crossings (e.g., \cite{Liu2008a}) which can be seen as a quantitative implementation of the ban of crossings \cite{Hudson1984,Hudson2007a}. 
This minimization can be understood as a purely formal principle (a principle of grammar detached from performance constraints but then problematic for the reasons explained above) or a principle related to performance.  
A principle of the minimization of crossings (or similar ones) is potentially problematic for at least three reasons:
\begin{itemize}
\item
It is an inductive solution that may overfit the data.
\item
It is naive and superficial because it does not distinguish the consequences (e.g., uncrossing dependencies) from the causes. A deep theory of language requires distinguishing underlying principles from products: the principle of compression \cite{Ferrer2012d} from the law of abbreviation (a product), the principle of mutual information maximization from vocabulary learning biases \cite{Ferrer2013g} (another product), and so on.
\item
If patterns are directly translated into principles, the risk is that of constructing a fat theory of language when merging the tentative theories from every domain. When integrating the principle of minimization of crossings with a general theory of syntax, one may get two principles: a principle of minimization of crossings and a principle of dependency length minimization. In contrast, a theory where the minimization of crossings is seen as a side-effect of a principle of dependency length minimization 
\cite{Ferrer2006d,Liu2008a,Morrill2009a,Ferrer2013b} might solve the problem in one shot through a single principle of dependency length minimization. However, it has cleverly been argued that a principle of minimization of crossings might imply a principle of dependency length minimization \cite{Liu2008a} and thus a principle of minimization of crossings might not imply any redundancy. 
\end{itemize}
Thus, it is important to review the hypothesis of the minimization of dependency length and the 
logical and statistical relationship with the minimization of $C$. 

\subsection{A Principle of Minimization of Dependency Lengths}

The length of a dependency is usually defined as the absolute difference between the positions involved (the 1st word of the sentence has position 1, the 2nd has position 2 and so on). In Fig. \ref{real_sentences_figure}, the length of the dependency between "John" and "saw" is $2-1= 1$ and the length of the dependency between "saw" and "dog" is $4-2 = 2$. In this definition, the units of length are word tokens (it might be more precise if defined in phonemes, for instance). 
If $d_i$ is the length of the $i$-th dependency of a tree (there are $n-1$ dependencies in a tree) and $g(d)$ is the cost of a dependency of length $d$, the total cost of a linguistic sequence from the perspective of dependency length is the total sum of dependency costs \cite{Ferrer2014a,Ferrer2013e}, which can defined as
\begin{equation}
D = \sum_{i=1}^n g(d_i),
\end{equation}
where $g(d)$ is assumed to be a strictly monotonically increasing function of $d$ \cite{Ferrer2013e}.
The mean cost of a tree structure is defined as $\left< d \right> = D/(n-1)$. If $g$ is the identity function ($g(d) = d$) then $D$ is the sum of dependency lengths (and $\left< d \right>$ is the mean dependency length).
It has been hypothesized that $D$ or equivalently $\left< d \right>$ is minimized by sentences (see \cite{Ferrer2013e} for a review).
The hypothesis does not imply that the actual value of $D$ has to be the minimum in absolute terms. Hereafter we assume that $g(d)$ is the identity function ($g(d) = d$). 

The minimum $D$ that can be obtained without altering the structure of the tree is the solution of the minimum linear arrangement problem in computer science \cite{Chung1984}. Another baseline is provided by  
the expected value of $D$ in a random arrangement of the vertices, which is \cite{Ferrer2004b}
\begin{equation}
E_0[D]=(n-1)(n+1)/3.
\label{expected_length_equation}
\end{equation}
Statistical analyzes of $D$ in real syntactic dependency trees have revealed that $D$ is systematically below chance (below $E_0[D]$) for sufficiently long sentences  
but above the value of a minimum linear arrangement on average \cite{Ferrer2004b,Ferrer2013c}. 

$D$ is one example of a metric or score to evaluate the efficiency of a sentence from a certain dimension (see \cite{Morrill2000a,Hawkins1998a} for similar metrics on syntactic structures). Stating clearly the metric that is being optimized is a requirement for a rigorous claim about efficiency of language. 
For instance, consider the sentence on top of Fig. \ref{extraposition_figure} and the version below that arises from the right-extraposition of the clause "who I knew". Notice that the dependency tree is the same in both cases (only word order varies). It has been argued that 
theories of processing based on the distance between dependents \emph{
"predict that an extraposed relative clause would be more difficult to process than an in situ, adjacent relative clause"} \cite{Levy2012a}. However, that does not grant one to conclude that the sentence on top of Fig. \ref{extraposition_figure} should be easier to process than the sentence below from that perspective: one has  
 $D = 3 \times 1 + 2 + 4 + 6 = 15$ for the sentence without the extraposition and $D = 3 \times 1 + 2 \times 2 + 3 = 10$ for the one with right-extraposition suggesting that the easier sentence is precisely the sentence with right-extraposition. 
The prediction about the cost of extraposition in \cite{Levy2012a} is an incomplete argument. The ultimate conclusion about the cost of extraposition requires considering all the dependency lengths, i.e. a true efficiency score. A score of sentence locality is needed to not rule out prematurely accounts of the processing difficulty of non-projective orderings that are based purely on \emph{"dependency locality in terms of linear positioning"} \cite{Levy2012a}. The issue is tricky even for studies where a quantitative metric of dependency length such as $D$ is employed: it is important to not mix values of the metric coming from sentences of different length to draw solid conclusions about a corpus or a language \cite{Ferrer2013c}.  
The need of strengthening quantitative standards \cite{Gibson2010a} and also the need of appropriate controls \cite{Culicover2010a,Ferrer2011c,Moscoso2013a} in linguistic research are challenges that require the serious commitment of each of us. 

\begin{figure}
\includegraphics[scale = 0.75]{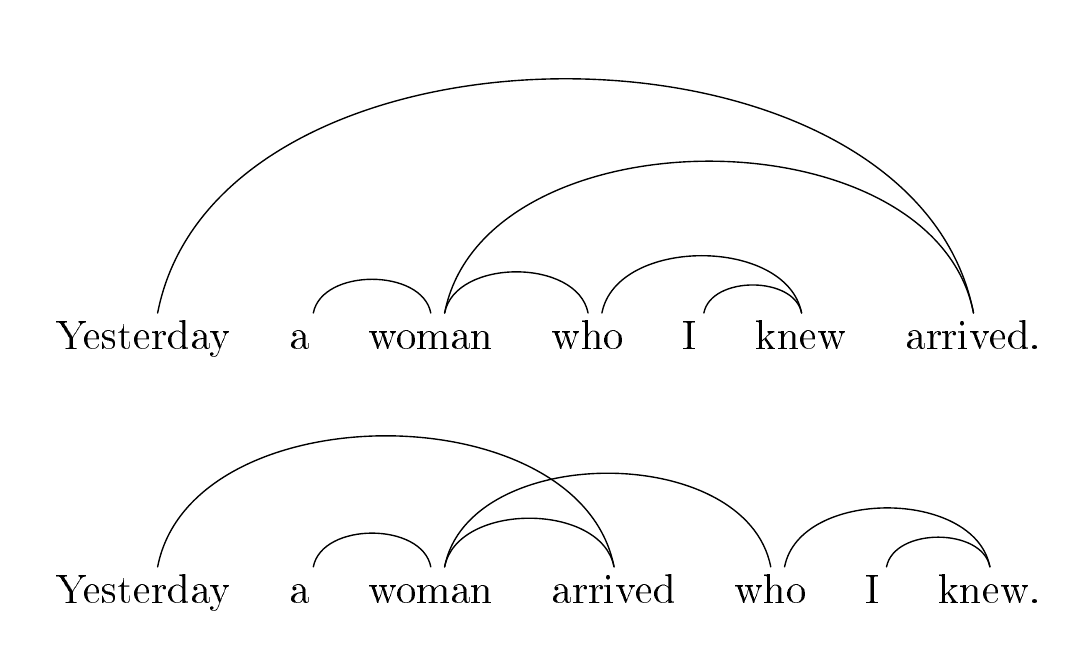}
\caption{\label{extraposition_figure} {\bf Top} an English sentence with a relative clause ("who I knew"). {\bf Bottom} the same sentence with a right extraposition of the relative clause. Adapted from \cite{Levy2012a}. }
\end{figure}

\subsection{The Relationship Between Minimization of Crossings and Minimization of Dependency Lengths}
 
Let us examine the logical relationship between the two principles above from the perspective of their global minima. 
On the one hand, the minimum value of $C$ is 0 \cite{Ferrer2013b} and the minimum value of $D$ is obtained by solving the minimum linear arrangement problem \cite{Baronchelli2013a} or a generalization \cite{Ferrer2013e}, which yields $D_\text{min}$. At constant $n$ and $\left< k^2\right>$, there are two facts:
\begin{itemize}
\item
$C=0$ does not imply $D = D_\text{min}$ in general. This can be shown by means of two extreme configurations, a star tree, which maximizes $\left< k^2 \right>$ and a linear tree, which minimizes $\left< k^2 \right>$ \cite{Ferrer2013d}:
   \begin{itemize}
   \item
   A star tree implies $C = 0$ \cite{Ferrer2013b}. In that tree, 
$D=D_\text{min}$ holds only when the hub is placed at the center \cite{Ferrer2013e}. If the hub is placed at one of the extremes of the sequence, $D$ is maximized for that tree \cite{Ferrer2013e}. Those results still hold when $g(d)$ is not the identity function but a strictly monotonically increasing function of $d$ \cite{Ferrer2013e}. Furthermore, the placement of the hub in one extreme implies the maximum $D$ that a non-crossing tree (not necessarily a star) can achieve, which is $D=n(n-1)/2$ \cite{Ferrer2013b}.  
   \item
   A linear tree can be arranged linearly with $C=0$ and $D = D_\text{min} = n - 1$ (as in Fig. \ref{kinds_of_trees_figure}, which has no crossings and coincides with the smallest $D$ than an unrestricted tree can achieve (as $d_i \geq 1$ and a tree has $n-1$ edges). In contrast, a linear arrangement of the kind of Fig. \ref{maximal_non_crossing_linear_tree_figure} has $C= 0$ but yields $D=n(n-1)/2$, i.e. the maximum value of $D$ that a non-crossing tree can achieve \cite{Ferrer2013b}.
\end{itemize}
\item
$D = D_\text{min}$ does not imply $C=0$ in general. It has been shown that a linear arrangement of vertices with crossings can achieve a smaller value of $D$ than that of a minimum linear arrangement that minimizes $D$ when no crossings are allowed \cite{Hochberg2003a} (Fig. \ref{minimum_linear_arrangements_figure}). 
\end{itemize}
Thus, there is not a clear relationship between the minima of $D$ and $C$ when one abstracts from the structural properties of real syntactic dependency trees. The impact of the real properties of dependency structures for the arguments should be investigated in the future.

\begin{figure}
\includegraphics[scale = 0.8]{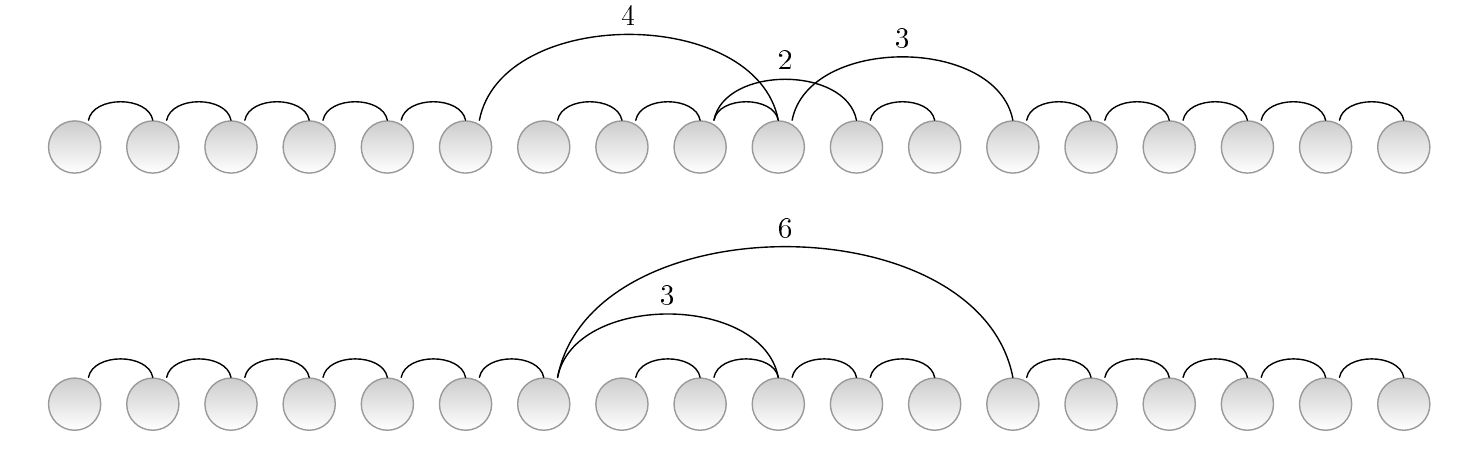}
\caption{
\label{minimum_linear_arrangements_figure}
Minimum linear arrangements of the same tree (only the length of edges that are longer than unity is indicated). 
{\bf Top} a minimum linear arrangement of a tree. The total sum of dependency lengths is $D = 4 + 2 + 3 + 14 = 23$. {\bf Bottom} a minimum linear arrangement of the same tree when crossings are disallowed. The total sum of dependency lengths is $D = 6 + 3 + 15 = 24$. Adapted from \cite{Hochberg2003a}. 
}
\end{figure}

As for the statistical relationship between $C$ and $D$, statistical analyzes support the hypothesis of a positive correlation between both at least in the domain between $n-1$, the minimum value of $D$, and $D=E_0[D]$ \cite{Ferrer2006d,Ferrer2013d}. For instance, crossings practically disappear if the vertices of a random tree are ordered to minimize $D$. The relationship between $C$ and $D$ in random permutations of vertices of the dependency trees is illustrated in Fig. \ref{correlation_figure}: $C$ tends to increase as $D$ increases from $D=D_\text{min}$ onwards. Results obtained with similar metrics \cite{Liu2008a,Liu2007a} are consistent with such a correlation. For instance, a measure of dependency length reduces in random trees when crossings are disallowed \cite{Liu2007a}. 
 
\begin{figure}
\includegraphics[scale = 0.6]{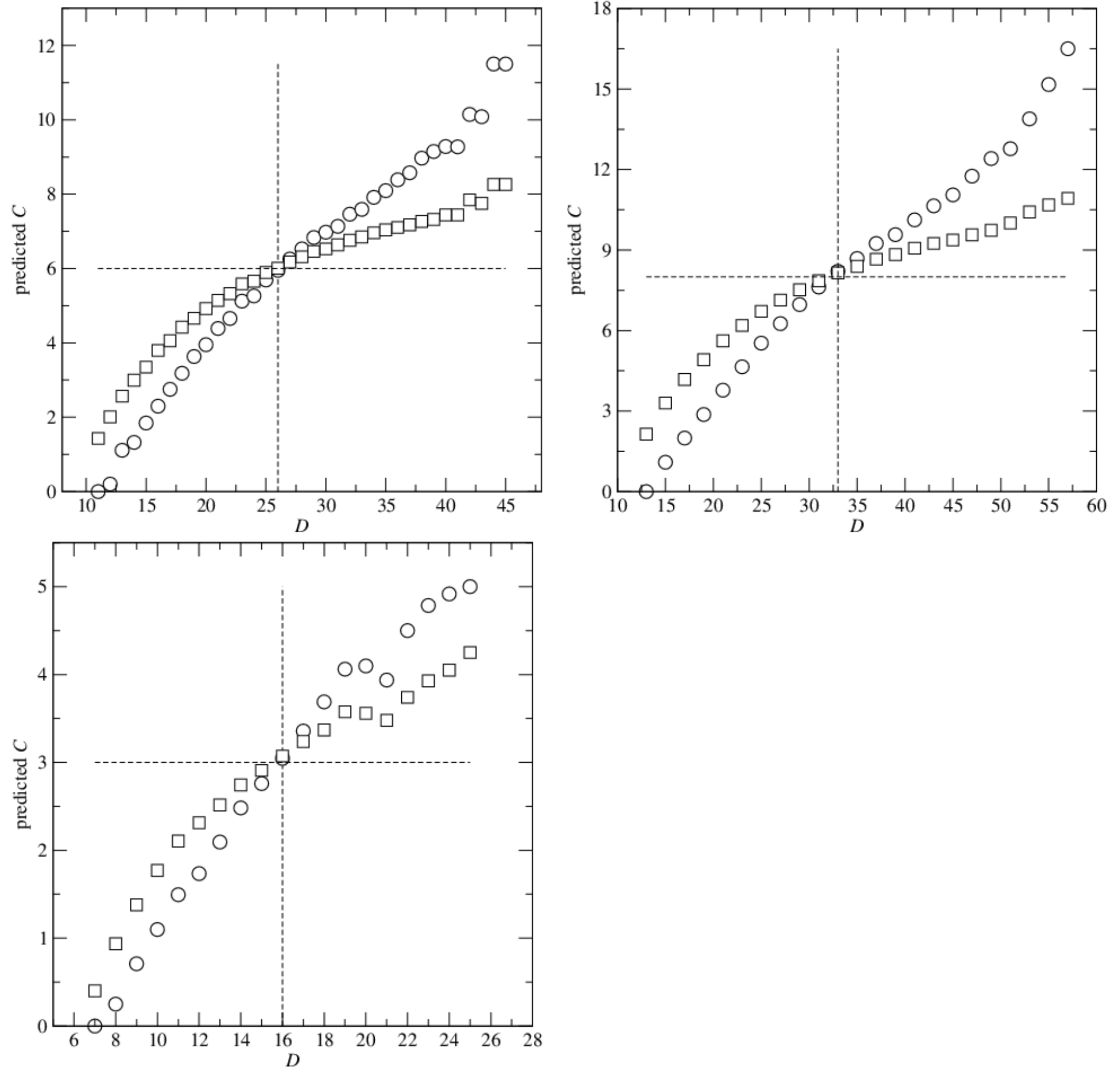}
\caption{
\label{correlation_figure}
Predictions about number of dependency crossings ($C$) as a function of the sum of dependency lengths ($D$) for dependency trees of real sentences.
$E[C|D]$, the average $C$ in all the possible permutations with a given value of $D$ 
 (\emph{circles}), is compared against the average $E_1[C]$ in those permutations (\emph{squares}). $E_1[C]$ is a prediction about $C$ based on information on the distance of just one of the vertices potentially involved in a crossing.
The \emph{vertical dashed line} indicates the value of $E_0[D]$ and the \emph{horizontal line} indicates the value of $E_0[C]$ (given a tree, those values are easy to compute with the help of Eqs.~\ref{expected_length_equation} and \ref{expected_crossings_equation}, respectively).
The value of $E[C|D]$ and its prediction is only shown for values of $D$ achieved by at least one permutation because certain values of $D$ cannot be achieved by a given tree. For a given tree, $D_\text{min}$ and $D_\text{max}$ are, respectively, the minimum and the maximum value of $D$ that can be reached ($D_\text{min} \leq D \leq D_\text{max}$).
{\bf Top-Left} sentence on top Fig. \ref{real_sentences_figure} with $D_\text{min} = 11$, $D_\text{max} = 45$, $E_0[D] = 26$ and $E_0[C] = 6$. 
{\bf Top-Right} sentence at the bottom of Fig. \ref{real_sentences_figure} with $D_\text{min} = 13$, $D_\text{max} = 57$, $E_0[D] = 33$ and $E_0[C] = 8$. Even values of $D$ are not found. 
{\bf Bottom-Left} results for the two sentences in Fig. \ref{extraposition_figure} with $D_\text{min} = 7$, $D_\text{max} = 25$, $E_0[D] = 16$ and $E_0[C] = 3$. Notice that the results are valid for  the couple of sentences in Fig. \ref{extraposition_figure} because they have the same structure in a different order. For this reason it is not surprising that $D_\text{min}$, $D_\text{max}$, $E_0[D]$, $E_0[C]$ and $E[C|D]$ coincide. Interestingly, $E_1[C]$ turns out to be the same for both, too. 
}
\end{figure}

This opens the problem of causality, namely if the minimization of $D$ may cause a minimization of $C$, or a minimization of $C$ may cause a minimization of $D$, or both principles cannot be disentangled or simply both principles are epiphenomena (correlation does not imply causality). Solving the problem of causality is beyond the scope of this article but we can however attempt to determine rationally which of the two forces, minimization of $D$ or minimization of $C$, might be the primary force by means of qualitative version of information theoretic model selection \cite{Burnham2002a}. The apparent tie between these two principles will be broken by the more limited explanatory power of the minimization of $C$.
The point is focusing on the phenomena that a principle of minimization of $C$ cannot illuminate. 
A challenge for that principle is the ordering of subject (S), object (O) and verb (V). The dependency structure of that triple is a star tree with the verb as the hub \cite{Ferrer2013e}. A tree of less than four vertices cannot have crossings \cite{Ferrer2013b}. Thus, $C=0$ regardless of the ordering of the elements of the triple. Interestingly, the principle of minimization of $C$ cannot explain why languages abandon SOV in favor of SVO \cite{Gell-Mann2011a}. In contrast, the attraction of the verb towards the center combined with the structure of the word order permutation space can explain it \cite{Ferrer2013e}. Another challenge for a principle of the minimization of $C$ are the relative ordering of dependents of nominal heads in SVO orders, that have been argued to preclude regression to SOV \cite{Ferrer2013e}. To sum up, a single principle of minimization of $C$ would compromise explanatory power and if its limitations were complemented with an additional principle of dependency length minimization then parsimony would be compromised. 

The reminder of the article is aimed at investigating a more parsimonious explanation for the ubiquity of non-crossing dependencies based on the minimization of $D$ as the primary force \cite{Ferrer2006d,Liu2008a,Morrill2009a,Ferrer2013b}. The minimization of $D$ would be a consequence of the minimization of cognitive effort: longer dependencies are cognitively more expensive \cite{Liu2008a,Morrill2000a,Gibson2000,Hawkins1994}. We will investigate two null hypotheses that allow one to predict the number of crossings as function of dependency lengths, which are determined by cognitive pressures. 

\section{A Stronger Null Hypothesis}

\label{stronger_null_hypothesis_section}

Here we consider a predictor for the number of crossings when some information about the length of the arcs is known.
The predictor guesses that number by considering, for every pair of edges that may potentially cross (pairs of edges sharing vertices cannot cross), the probability that they cross knowing the length of one of the edges and assuming that the vertices of the other edge have been arranged linearly at random. The null hypothesis in Sect. \ref{null_hypothesis_section} predicts the number of crossings in the same fashion but replacing that probability by the probability that two edges cross when both are arranged linearly at random (not arc length is given).
 
The null hypothesis in Sect. \ref{null_hypothesis_section} and the null hypothesis that will be explored in the current section, are reminiscent of two null hypotheses that are used in networks theory: random binomial graphs and random graphs with an arbitrary degree sequence or degree distribution \cite{Molloy1995a,Molloy1998a,Newman2001d} (in our case, information about dependency length plays an equivalent role to vertex degree in those models).

\subsection{The Probability that Two Edges Cross}

Let $\pi(v)$ be the position of the vertex $v$ in the linear arrangement ($\pi(v) = 1$ if $v$ is the 1st vertex of the sequence, $\pi(v) = 2$ it is the 2nd vertex and so on). $u\sim v$ is used to indicate an edge between vertices $u$ and $v$, namely that $u$ and $v$ are connected.
A vertex position $q$ is covered by the edge $u\sim v$ if and only if $min(\pi(u),\pi(v)) < q < max(\pi(u),\pi(v))$. A position $q$ is external to the edge $u \sim v$ if and only if $q < min(\pi(u),\pi(v))$ or $q > max(\pi(u),\pi(v))$. $s \sim t$ crosses $u \sim v$ if and only if 
\begin{itemize}
\item
either $\pi(s)$ is covered by $u \sim v$ and $\pi(t)$ is external to $u \sim v$ 
\item
or $\pi(t)$ is covered by $u \sim v$ and $\pi(s)$ is external to $u \sim v$. 
\end{itemize}
Notice that edges that share vertices cannot cross.

Let us consider first that no information is known about the length of two arcs. The probability that two edges, $s \sim t$ and $u\sim v$, cross when arranged linearly at random is $p(\text{cross}) = 1/3$ if the edges do not share any vertex and $p(\text{cross}) = 0$ otherwise \cite{Ferrer2013d}. We will investigate 
$p(\text{cross}|d)$, the probability that two edges, $s \sim t$ and $u\sim v$, cross when arranged linearly at random knowing that (a) one of the edges has length $d$, e.g., $|\pi(u)-\pi(v)| = d$ and (b) the edges do not share any vertex.

If $s\sim t$ and $u\sim v$ share a vertex, then $p(\text{cross}|d) = 0$. If not, 

\begin{proposition}
\begin{eqnarray}
p(\text{cross}|d) & = & 2\frac{(d-1)(n-d-1)}{(n-2)(n-3)} \nonumber \\
       & = & \frac{2(-d^2+ nd -n + 1)}{(n-2)(n-3)}. \label{probability_of_crossing_equation}
\end{eqnarray}
\end{proposition}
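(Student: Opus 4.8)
The plan is to reduce the probability to a straightforward counting argument over positions. First I would fix the two positions occupied by the endpoints of the edge $u \sim v$ and use the constraint $|\pi(u)-\pi(v)| = d$. Since the roles of $u$ and $v$ are symmetric, I may assume $\pi(u) < \pi(v)$, so the endpoints sit at positions $a$ and $a+d$ for some $a$. The key observation is that the relevant counts depend only on $d$ and not on $a$: the positions \emph{covered} by $u \sim v$ are exactly $a+1, \dots, a+d-1$, of which there are $d-1$; removing the two endpoints and these covered positions from the $n$ total positions leaves $n - 2 - (d-1) = n - d - 1$ positions that are \emph{external} to $u \sim v$. Because these two counts are independent of $a$, conditioning only on the length $d$ (rather than on the exact placement of $u \sim v$) gives the same probability, which is what makes $p(\text{cross}|d)$ well defined.

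Next I would place the endpoints $s$ and $t$ of the other edge uniformly at random among the $n-2$ positions not occupied by $u$ or $v$. The hypothesis that the edges share no vertex guarantees $s, t \notin \{u, v\}$, so these are exactly the admissible positions. By the crossing criterion stated above, $s \sim t$ crosses $u \sim v$ if and only if one of $s, t$ lands on a covered position and the other on an external position. Counting ordered placements, the favorable configurations are $s$ covered with $t$ external, or $s$ external with $t$ covered, giving $2(d-1)(n-d-1)$ favorable ordered pairs; the total number of ordered placements of the two distinct endpoints into the $n-2$ available positions is $(n-2)(n-3)$. Taking the ratio yields
\begin{equation}
p(\text{cross}|d) = \frac{2(d-1)(n-d-1)}{(n-2)(n-3)},
\end{equation}
and expanding $(d-1)(n-d-1) = -d^2 + nd - n + 1$ recovers the second form in the statement.

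The argument is elementary, so I do not expect a genuine analytical obstacle; the only point demanding care is the bookkeeping of which positions are available and which are covered versus external. Concretely, one must verify that the covered/external counts are truly independent of the absolute location $a$ of $u \sim v$ (so that conditioning on length alone suffices) and that the denominator reflects the $n-2$ remaining positions rather than $n$, once the positions of $u$ and $v$ are removed. A useful sanity check on the boundary cases is that when $d = 1$ there are no covered positions and the formula correctly returns $0$, and when $d = n-1$ there are no external positions and it again returns $0$, matching the intuition that a maximally short or maximally long edge cannot be crossed under this scheme.
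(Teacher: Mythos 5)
Your proof is correct and follows essentially the same counting argument as the paper: identify the $d-1$ covered and $n-d-1$ external positions, place the other edge's two endpoints among the remaining $n-2$ positions, and take the ratio of favorable to total placements (you count ordered pairs where the paper counts unordered ones, which is immaterial). Your explicit checks that the counts are independent of the edge's absolute location and that the boundary cases $d=1$ and $d=n-1$ give zero are welcome additions but do not change the argument.
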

\begin{proof}
To see this notice that $d-1$ is the number of vertex positions covered by the edge of length $d$ and $n-d-1$ is the number of vertices that are external to that edge. Once the edge of length $d$ has been arranged linearly, there are 
\begin{equation}
{n-2 \choose 2} = \frac{(n-2)(n-3)}{2}
\end{equation}
possible placements for the two vertices of the other edge of which only $(d-1)(n-d-1)$ involve a position covered by the edge of length $d$ and another one that is external to that edge.
\qed
\end{proof}

$p(\text{cross}|d)$ and $p(\text{cross})=1/3$ are related, i.e. 
\begin{equation}
\sum_{d=1}^{n-1} p(\text{cross}|d) p(d) = \frac{1}{3}, \label{relationship_between_probabilities_of_crossing_equation}
\end{equation}
where 
\begin{equation}
p(d) = \frac{2(n - d)}{n(n-1)}
\label{probability_of_length_equation}
\end{equation}
is the probability that the linear arrangement of the two vertices of an edge yields a dependency of length $d$ \cite{Ferrer2004b}. Eq.~\ref{relationship_between_probabilities_of_crossing_equation} is easy to prove applying the definition of conditional probability ($p(\text{cross}|d)=p(\text{cross}, d)/p(d)$), which gives
\begin{equation}
\sum_{d=1}^{n-1} p(\text{cross}|d) p(d) = \sum_{d=1}^{n-1} p(\text{cross}, d) = p(\text{cross}) = \frac{1}{3}.
\end{equation}

When $n$ takes the smallest value needed for the possibility of crossings, i.e. $n = 4$ \cite{Ferrer2013d}, Eq.~\ref{probability_of_crossing_equation} yields $p(\text{cross}|1) = p(\text{cross}|3) = 0$ and $p(\text{cross}|2)=1$.
It is easy to show that 
\begin{itemize}
\item
$p(\text{cross}|d)$ is symmetric, i.e. $p(\text{cross}|d) = p(\text{cross}|n-d)$, 
\item
$p(\text{cross}|d)$ has two minima ($p(\text{cross}|d)=0$), at $d = 1$ and $d = n - 1$.
\item
\begin{equation}
p(\text{cross}|d) \leq p_\text{max}(\text{cross}|d),
\label{upper_bound_of_probability_of_crossing_equation}
\end{equation}
where 
\begin{equation}
p_\text{max}(\text{cross}|d) = \frac{\frac{n^2}{2} - 2(n-1)}{(n-2)(n-3)}.
\label{upper_bound_of_probability_of_crossing_equation2}
\end{equation}
To see this notice that $p(\text{cross}|d)$ is a function (Eq.~\ref{probability_of_crossing_equation}) that has a maximum at $d=d^*=n/2$. 
Applying $d=d^*$, Eq.~\ref{probability_of_crossing_equation} gives Eq.~\ref{upper_bound_of_probability_of_crossing_equation2}.
As $p(\text{cross}|d)$ is not defined for non-integer values of $d$, equality in Eq.~\ref{upper_bound_of_probability_of_crossing_equation} needs that $d^*$ is integer, namely that $n$ is even.
In the limit of large $n$, one has that $p_\text{max}(\text{cross}|d) = 1/2$.
\item
Accordingly, $p(\text{cross}|d)$ has either a maximum at $d = n/2$ if $n$ is even or two maxima, at $d = \lfloor n/2 \rfloor$ and $d = \lceil n/2 \rceil$ when $n$ is even because $d$ is a natural number.
\end{itemize}

\subsection{The Expected Number of Edge Crossings}

\label{expected_crossings_subsection}

Imagine that the structure of the tree is defined by an adjacency matrix $A = \{a_{uv} \}$ such that $a_{uv} = 1$ if the vertices $u$ and $v$ are linked and $a_{uv} = 0$ otherwise. Let $C$ be the number of edge crossings and $C(u,v)$ be the number of crossings where the edge formed by $u$ and $v$ is involved ($C(u,v) = 0$ if $u$ and $v$ are unlinked), i.e. 
\begin{equation}
C = \frac{1}{4} \sum_{u=1}^n \sum_{v=1}^n a_{uv} C(u,v)
\end{equation}
and 
\begin{equation}
C(u,v) = \frac{1}{2} \sum_{s=1, s \neq u,v}^n ~ \sum_{t=1, t \neq u,v}^n a_{st} C(u,v; s,t),
\end{equation}
where $C(u,v; s,t)$ indicates if $u,v$ and $s,t$ define a couple of edges that cross ($C(u,v; s,t) = 1$ if they cross, $C(u,v; s,t) = 0$ otherwise).
Thus, the expectation of $C$ is
\begin{equation}
E[C] = \frac{1}{4} \sum_{u=1}^n \sum_{v=1}^n a_{uv} E[C(u,v)].
\label{raw_expected_crossings_equation}
\end{equation}
In turn, the expectation of $C(u,v)$ is 
\begin{equation}
E[C(u,v)] = \frac{1}{2} \sum_{s=1, s \neq u,v}^n ~ \sum_{t=1, t \neq u,v}^n a_{st} E[C(u,v; s,t)].
\end{equation}
As $C(u,v; s,t)$ is an indicator variable,
\begin{equation}
E[C(u,v; s,t)] = p(s \sim t ~\&~ u \sim v~\text{cross}),
\end{equation}
namely $E[C(u,v; s,t)]$ is the probability that the edges $s \sim t$ and $u \sim v$ cross knowing that they do not share any vertex.

The number of edges that can cross the edge $u\sim v$ is $n-k_u-k_v$ (edges that share a vertex with $u \sim v$ cannot cross), which gives \cite{Ferrer2013d}
\begin{equation}
E[C(u,v)] = (n-k_u-k_v) p(\text{cross}| u \sim v) 
\end{equation}
assuming that the probability that the edge $u \sim v$ crosses another edge depends at most on $u \sim v$.
Applying the last result to Eq.~\ref{raw_expected_crossings_equation},
we obtain a general predictor of the number of crossings under a null of hypothesis $x$, i.e.
\begin{equation}
E_x[C] = \frac{1}{4} \sum_{u=1}^n \sum_{v=1}^n a_{uv} (n-k_u-k_v) p_x(\text{cross}| u \sim v),
\label{general_expected_crossings_equation}
\end{equation}
where $x$ indicates if the identity of one of the edges potentially involved a crossing, i.e. $u \sim v$ is actually known ($x = 1$ if it is known; $x= 0$ otherwise). 
Eq.~\ref{general_expected_crossings_equation} defines a family of predictors where $x$ is the parameter. 
$x=0$ corresponds to the null hypothesis in Sect. \ref{null_hypothesis_section}. To see it, notice that 
$x=0$, i.e.
\begin{equation}
p_x(\text{cross}| u \sim v) = p(\text{cross}) = 1/3
\end{equation}
transforms Eq.~\ref{general_expected_crossings_equation} into Eq.~\ref{expected_crossings_equation} \cite{Ferrer2013d}.
$E_x[C]$ can be seen as a simple approximation to the expected number of crossings in a linear random arrangement of vertices when all edge lengths are given. While $E_0[C]$ is a true expectation, $E_1[C]$ is not for not conditioning on the same lengths for every pair of edges that may cross.   

A potentially more accurate prediction of $C$ with regard to Eq.~\ref{expected_crossings_equation} is obtained when $x = 1$.  
For simplicity, let us reduce the knowledge of an edge to the knowledge of its length.
Let us define $d_{uv}= |\pi(u) - \pi(v)|$ as the distance between the vertices $u$ and $v$. If $x = 1$,  
the substitution of $p_x(\text{cross} | u \sim v)$ by $p(\text{cross}|d_{uv})$ in Eq.~\ref{general_expected_crossings_equation} yields $E_1[C]$, a prediction of $C$ where, for ever pair of edges that many potentially cross, the length of one edge is given and a random placement is assumed for the other edge, i.e. 
\begin{equation}
E_1[C] = \frac{1}{4} \sum_{u=1}^n \sum_{v=1}^n a_{uv} (n-k_u-k_v) p(\text{cross}|d_{uv}). \label{expected_crossings_based_on_distance_equation}
\end{equation}
Fig. \ref{correlation_figure} shows that $E_1[C]$ is positively correlated with $D$ on permutations of the vertices of the trees of a real sentences. Interestingly, the values of $E_1[C]$ overestimate the average $C$ when $D < E_0[D]$ and underestimate it when $D > E_0[D]$.

Variations in dependency lengths can alter $E_1[C]$. A drop in $p(\text{cross}|d_{uv})$ leads to a drop in $E_1[C]$. It has been shown above that 
$p(\text{cross}|d_{uv})$ is minimized by $d = 1$ and $d = n - 1$ and maximized by $d_{uv} \approx n/2$. 
When $d_{uv} < n/2$, a decrease in $d_{uv}$ decreases $p(\text{cross}|d_{uv})$. In contrast, a decrease in $d_{uv}$ when $d_{uv} > n/2$, increases $p(\text{cross}|d_{uv})$. 
However, the shortening of edges is more likely to decrease $p(\text{cross}|d)$ because
\begin{itemize}
\item
Tentatively, the linear arrangement of a tree can only have $n - d$ edges of length $d$ \cite{Ferrer2004b}.
\item
Under the null hypothesis that edges are arranged linearly at random, short edges are more likely than long edges. In that case, $p(d)$, the probability that an edge has length $d$ satisfies $p(d) \propto n - d$ (Eq. \ref{probability_of_length_equation}). 
\item
The potential ordering of a sentence is unlikely to have many dependencies of length greater than about $n/2$ because
the cost of a dependency is positively correlated with its length \cite{Morrill2000a,Gibson2004a}. 
\item
From an evolutionary perspective, initial states are unlikely to  involve many long dependencies \cite{Ferrer2014a}. 
\end{itemize}
Thus, it is unlikely that the shortening of edges increases $E_1[C]$. Instead, this is likely to decrease $E_1[C]$. Fig. \ref{correlation_figure} shows that
\begin{itemize}
\item
the average $E_1[C]$ is tends to increase as $D$ increases 
\item
the mean $C$ is bounded above by $E_1[C]$ (in the domain $D \leq E_0[D]$)
\end{itemize}
in concrete sentences. 

Applying the definition of $p(\text{cross}|d)$, given in Eq.~\ref{probability_of_crossing_equation}, to Eq.~\ref{expected_crossings_based_on_distance_equation}, yields
\begin{equation}
E_1[C] = \frac{B_2 - B_1}{(n-2)(n-3)} ,
\end{equation}
where 
\begin{equation}
B_2 = \frac{1}{2}\sum_{u=1}^n \sum_{v=1}^n a_{uv} (n-k_u-k_v)(n-d_{uv})d_{uv}\\
\label{contribution_of_dependency_lengths_equation}
\end{equation}
and 
\begin{align}
B_1 & = \frac{n-1}{2}\sum_{u=1}^n \sum_{v=1}^n a_{uv}(n-k_u-k_v) \nonumber \\
  & = \frac{n-1}{2}\left( n \sum_{u=1}^n \sum_{v=1}^n a_{uv} - 2 \sum_{u=1}^n k_{uv} \sum_{v=1}^n a_{uv} \right) \nonumber \\
  & = \frac{n-1}{2}\left( n \sum_{u=1}^n k_{u} - 2 \sum_{u=1}^n k^2_{u} \right) \nonumber \\
  & = \frac{n-1}{2}\left(2n(n-1) - 2 n \left<k^2\right> \right) && \text{applying Eqs.~\ref{sum_of_degrees_equation} and \ref{degree_2nd_moment_equation}} \nonumber \\
  & = n(n-1)\left(n-1 - \left<k^2\right> \right).
\end{align}
On the one hand, notice that $B_1\geq 0$ because $n \geq 1$ and $\left< k^2 \right> \leq n - 1$ \cite{Ferrer2013b}. On the other hand, notice that $B_2 \geq 0$ because 
\begin{itemize}
\item
The fact that $C(u,v) < 0$ is impossible by definition and also the fact that $C(u,v) \leq n - k_u - k_v$ \cite{Ferrer2013b} yields $n - k_u - k_v  \geq 0$.
\item
$d_{uv} \leq n - 1$ by definition \cite{Ferrer2013b}.
\end{itemize} 
Thus, $E_1[C]$ is proportional to the difference between two terms: a term $B_2$ that depends on dependency lengths and a term $B_1$ that depends exclusively on vertex degrees and sentence length (in words). Interestingly, Eq.~\ref{expected_crossings_equation} allows one to see $E_1[C]$ as a function of $E_0[C]$ since $B_1 = 6(n - 1) E_0[C]$. 

\section{Another Stronger Null Hypothesis}

\label{another_stronger_null_hypothesis_section}

Another prediction from a stronger null hypothesis is $E[C|D]$, the expected value of $C$ when one of the orderings (permutations) preserving the actual value of $D$ is chosen uniformly at random.
Fig. \ref{correlation_figure} shows 
that knowing the exact value of $D$, a positive correlation between $E[C|D]$ and $D$ follows for a concrete sentence (this is what circles are showing) and, interestingly, $E[C|D]$ predicts a smaller number of crossings than the average $E_1[C]$ as a function of $D$.

\section{Predictions, Testing and Selection}

\label{prediction_and_testing_section}

Table \ref{test_and_prediction_table} compares $E_0[C]$, $E_1[C]$ and $E[C|D]$ for the example sentences that have appeared so far. Table \ref{test_and_prediction_table} shows that, according to the normalized error, 
\begin{itemize}
\item
$E_0[C]$ makes the worst predictions.
\item
The predictions of $E[C|D]$ are the best except in one sentence where $E_1[C]$ wins.   
\end{itemize}
The take home message here is that it is possible to make quantitative predictions about the number of crossings, an aspect that theoretical approaches to the problem of crossing dependencies have missed \cite{Morrill2009a, deVries2012a, Levy2012a}. This quantitative requirement should not be regarded as a mathematical \emph{divertimento}: science is about predictions \cite{Bunge2013a}. 

\begin{table}
\caption{\label{test_and_prediction_table} The predicted crossings by each of the three null hypotheses, i.e. (1) random linear arrangement of vertices, (2) random linear arrangement with knowledge of the length of one of the edges that can potentially cross, and (3) random linear arrangement of vertices at constant sum of dependency lengths, for the example sentences employed in this article. Results of p-value testing for the 3rd null hypothesis are also included. $n$ is the number of vertices of the tree, $\left< k^2 \right>$ is the degree second moment about zero, $D$ is the sum of dependency lengths, $C$ is the actual number of crossings, $C_\text{max}$ is the potential number of crossings, i.e. $C_\text{max} = n \left( n-1-\left< k^2 \right>\right)/2$ \cite{Ferrer2013b}. $E_0[C]$, $E_1[C]$ and $E[C|D]$ are the predicted number of crossings according to the 1st, 2nd and 3rd hypotheses, respectively. $\epsilon_0[C]$, $\epsilon_1[C]$ and $\epsilon[C|D]$ are the normalized error of the 1st, 2nd and 3rd hypotheses, respectively, i.e. $\epsilon_x[C] = |C - E_x[C]|/C_\text{max}$ and $\epsilon[C|D] = |C - E[C|D]|/C_\text{max}$. Left and right p-values are provided for two statistics, $C$ and $|C - E[C|D]|$ under the 3rd null hypothesis. $R$ is the number of permutations of the vertex sequence where $D$ coincides with the original value. } 
\begin{tabular}{p{3.7cm}p{1.5cm}p{1.8cm}p{1.5cm}p{1.8cm}}
\hline\noalign{\smallskip}
 & ~Fig. \ref{real_sentences_figure}, top~ & ~Fig. \ref{real_sentences_figure}, bottom~ & ~Fig. \ref{extraposition_figure}, top~ & ~Fig. \ref{extraposition_figure}, bottom~ \\ 
\noalign{\smallskip}\svhline\noalign{\smallskip}
$n$ & 9 & 10 & 7 & 7 \\
$\left< k^2 \right>$ & 4 & 4.2 & 3.4 & 3.4 \\
$D$ & 13 & 17 & 15 & 10 \\
$C$ & 0 & 1 & 0 & 1 \\
$C_\text{max}$ & 18 & 24 & 9 & 9 \\
$E_0[C]$ & 6 & 8 & 3 & 3 \\
$\epsilon_0[C]$ & 0.33 & 0.29 & 0.33 & 0.22 \\
$E_1[C]$ & 2.4 & 4.2 & 1.2 & 2.2 \\
$\epsilon_1[C]$ & 0.13 & 0.14 & 0.13 & 0.13 \\
$E[C|D]$ & 1.1 & 2 & 2.8 & 1.1 \\
$\epsilon[C|D]$ & 0.062 & 0.041 & 0.31 & 0.011 \\
Left p-value of $C$& 0.28 & 0.37 & 0.058 & 0.69 \\
Right p-value of $C$& 1 & 0.88 & 1 & 0.75 \\
Left p-value of $|C-E[C|D]|$& 0.94 & 0.54 & 0.97 & 0.43 \\
Right p-value of $|C-E[C|D]|$& 0.33 & 0.71 & 0.088 & 1 \\
$R$ & 288 & 6664 & 548 & 102 \\
\noalign{\smallskip}\hline\noalign{\smallskip}
\end{tabular}
\end{table}

Our exploration of some sentences suggest that $E[C|D]$ makes better predictions in general. Notwithstanding we cannot rush to claim that $E[C|D]$ is the best model among those that we have considered only for that reason. According to standard model selection, the best model is one with the best compromise between the quality of its fit (the error of its predictions) and parsimony (the number of parameters) \cite{Burnham2002a}. Applying a rigorous framework for model selection is beyond the scope of this article but we can examine the parsimony of each model qualitatively to shed light on the best model. Interestingly, the three predictors vary concerning the amount of information that suffices to make a prediction. A comparison of the minimal information that each needs to make a prediction would be a better approach but that is beyond the scope of the current article. The value of $E_0[C]$ can be computed knowing only $n$ and $\left< k^2 \right>$ (Eq.~\ref{expected_crossings_equation}). The value of $E_1[C]$ can be computed knowing only $n$, the length of every edge and the degrees of the nodes forming every edge (Eqs.~\ref{probability_of_crossing_equation} and \ref{expected_crossings_based_on_distance_equation}). 
The calculation of $E[C|D]$ is less demanding than that of $E_1[C]$ concerning edge lengths. i.e. the total sum of the their lengths suffices (the length of individual edges is irrelevant), but still employs some information about edges, e.g., the nodes forming every edge (Sect. \ref{another_stronger_null_hypothesis_section}).
Our preliminary results (Table \ref{test_and_prediction_table}) and our analysis of parsimony from the perspective of sufficient information suggests that $E[C|D]$ has a better trade-off between the quality of its predictions and parsimony with respect to $E_1[C]$.  Interestingly, none of the models has free parameters if the ordering of the vertices and the syntactic dependency structure of the sentence is known as we have assumed so far. 

It is possible to perform traditional p-value testing of our models. For simplicity we focus on the null hypothesis that is defined in Sect. \ref{another_stronger_null_hypothesis_section}, i.e. permutations of vertices where the sum of edge lengths coincides with the true value (Table \ref{test_and_prediction_table} shows the number of permutations of this kind for each sentence). We consider two statistics for test test. First, $C$, the actual number of crossings of a dependency tree. Then one can define the left p-value as the proportion of those permutations with a number of crossings at most as large as the true value. Similarly, one can define the right p-value as the proportion of those permutations with a number of crossings at least as large as the true value.
One has to choose a significance level $\alpha$. The significance level must be such that there can be  p-values bounded above by $\alpha$ a priori. Otherwise, one is condemned to make type II errors. 
The smallest possible p-value is $1/R$, where $R$ is the number of permutations yielding the original value of $D$ (there is at least one permutation giving the same value of the statistic, i.e. the one that coincides with the original linear arrangement). Thus, $\alpha$ must exceed $1/R_\text{min}$, being $R_\text{min}$ the smallest value of $R$ in Table \ref{test_and_prediction_table}. $R_\text{min} = 102$ yields $\alpha \geq 1/102 \approx 0.01$.       
Thus we can safely choose a significance level of $\alpha = 0.05$. Table \ref{test_and_prediction_table} shows that the left and right p-values are always below $\alpha$, suggesting that the real numbers of crossings are compatible with those of this null hypothesis. However, notice that the p-value is borderline for one sentence (Fig. \ref{extraposition_figure}, top). 
Second, we consider $|C-E[C|D]|$, the absolute value of the difference between the actual number of crossings and the expected value, as another statistic to perform p-value testing. Accordingly, we define left and right p-values for the latter statistic following the same rationale used for the p-values of $C$. Table \ref{test_and_prediction_table} shows $|C-E[C|D]|$ is neither significantly low nor significantly large, 
thus providing support for the hypothesis that the number of crossings is determined by global constraints on dependency lengths.
  
Our p-value tests should be seen as preliminary statistical attempts. Future research should involve more languages and large dependency treebanks. Moreover, information theoretic models selection offers a much powerful approach over traditional p-value testing \cite{Burnham2002a} and should be explored in the future.    

\section{Discussion}

\label{discussion_section}

In this article, the possibility that the low number of crossings in dependency trees is a mere consequence of chance has been considered. 
As the expected number of crossings (when edge lengths are not given) decreases as the degree 2nd moment increases, a high hubiness could lead to a small number of crossings by chance, when dependency lengths are unconstrained. However, it has been shown that the hubiness required to have a small number of crossings in that circumstance would imply star trees, which are problematic for at least two reasons: real syntactic dependency trees of a sufficient length are not star trees and the diversity of possible syntactic dependency structures would be seriously compromised (Sect. \ref{null_hypothesis_section}). One cannot exclude that hubiness has some role in decreasing the potential number of crossings in sentences \cite{Ferrer2013b,Liu2007a} but it cannot be the only reason. "Grammar" has been examined as an explanation for the rather low frequency of crossings and the more parsimonious hypothesis of the minimization of syntactic dependency lengths \cite{Ferrer2006d,Liu2008a,Morrill2009a,Ferrer2013b} has been revisited (Sect. \ref{alternative_hypotheses_section}). Stronger null hypotheses involving partial information about dependency lengths suggest that the shortening of the dependencies is likely to imply a reduction of crossings (recall empirical evidence in Fig. \ref{correlation_figure} and general mathematical arguments based on one of those stronger null hypotheses in Sect. \ref{expected_crossings_subsection}). 
Moreover, it has been shown that a null hypothesis incorporating global information about dependency lengths, the sum of dependency lengths (Sect. \ref{another_stronger_null_hypothesis_section}) allows one to make specially accurate predictions about the actual number of crossings. The error of those predictions is neither surprisingly low nor high (Sect. \ref{prediction_and_testing_section}).
Our findings provide support for the hypothesis that uncrossing dependencies could be a side-effect of dependency length minimization, a principle that derives from the limited computational resources of the human brain \cite{Liu2008a,Morrill2000a,Gibson2000,Hawkins1994}. A universal grammar, a faculty of language or a competence-plus limiting the number of crossings might not be necessary. Just a version of Zipf's least effort might suffice \cite{Zipf1949a}. 

Upon a superficial analysis of facts, it is tempting to conclude that crossings cause processing difficulties and thus should be reduced. That follows easily from the correlation between crossings and dependency lengths that has been found in real and artificial sentence structures (e.g., \cite{Ferrer2013d,Liu2007a}). However, the cognitive cost of crossings does not need to be a direct consequence of the crossing \cite{Liu2008a} but a side effect of the longer dependencies that crossings are likely to involve \cite{Ferrer2006d,Morrill2009a}. It has been argued that a single principle of minimization of dependency crossings would compromise seriously parsimony and explanatory power (Sect. \ref{alternative_hypotheses_section}).

Although Fig. \ref{correlation_figure} shows that solving the minimum linear arrangement problem yields zero crossings for concrete sentences ($E[C|D] = 0$ and thus $C = 0$ for $D = D_\text{min}$), it is important to bear in mind that dependency length minimization cannot promise to reduce crossings to zero in all cases: the minimum linear arrangement of a tree can involve crossings (Fig. \ref{minimum_linear_arrangements_figure}). Interestingly, this means that the presence of some crossings in a sentence does not contradict a priori pressure for dependency length minimization or pressure for efficiency. Recall also the examples of real English in Fig. \ref{extraposition_figure}, showing that an ordering without crossings can have a higher sum of dependency lengths than one with crossings. 
This is specially important for Dutch, where crossing structures abound and have been shown to be easier to process that parallel non-crossing structures in German \cite{Bach1986a}. We believe that our theoretical framework might help to illuminate experiments suggesting that orderings with crossings tax working memory less than orderings with nesting \cite{deVries2012a}. 

In this article, we have addressed the problem of non-crossing dependencies from a really theoretical perspective. The arguments are a priori valid for any language and any linguistic phenomenon. This is a totally different approach from the investigation of 
non-crossing dependencies in a given language with a specific phenomenon (e.g., extraposition of relative clauses in English as in \cite{Levy2012a}, see also \cite{Bach1986a} for other languages). With such a narrow focus, the development of a general theory of word order is difficult. Generativists have been criticized for not having developed a general theory of language but a theory of English \cite{Evans2009a}. If one takes seriously recent concerns about the limits of building a theory from a sample of languages \cite{Piantadosi2013a}, it follows that hypotheses about non-crossing dependencies that abstract from linguistic details like ours (see also \cite{deVries2012a}) should receive more attention in the future.

Although we believe that non-crossing dependencies are the main reason why crossings dependencies do not occur very often in languages, we do not believe that cognitive pressure for dependency length minimization is the only factor involved in word order phenomena. The maximization of predictability or the structure of word order permutation space are crucial ingredients for a non-reductionist theory of word order \cite{Ferrer2013e,Ferrer2014a}. Word order is a multiconstraint engineering problem \cite{Ferrer2014a}.

Tentatively, a deep theory of syntax does not imply grammar or a language faculty exclusively. A well-known example is the case of sentence acceptability that may derive in some cases from processing constraints \cite{Morrill2000a,Hawkins2004a}. 
Our findings suggest that grammar may not be an autonomous entity but a series of phenomena emerging from physical or psychological constraints. Grammar might simply be an epiphenomenon \cite{Hopper1998a}. This is a more parsimonious hypothesis than grammar as a conventionalization of processing constraints \cite{Hawkins2004a}. Grammar may require fewer parameters than commonly believed. This is consistent with the idea that it would be desirable that the quantitative constraints of competence-plus are replaced by a theory of processing complexity or that the content of the "plus" derives from memory and integration costs \cite{Hurford2012_Chapter3}. Being the "plus" part non-empty, the point is elucidating whether the "competence" part is indeed empty or at least lighter than commonly believed.

A deep theory of language cannot be circumscribed to language: a deep physical theory for the fall of inanimate objects is also valid for animate objects (with certain refinements). A deep theory of syntactic structures and their linear arrangement does not need to be valid only for human language but also for other natural systems producing and processing sequences and operating under limited resources. For these reasons, our results should be extended to non-tree structures to investigate crossings in RNA structures \cite{Chen2009a}.

\begin{acknowledgement}
The deep and comprehensive review of Steffen Eger has been crucial for the preparation of the final version of this article. We owe him the current proof of Proposition \ref{above_quasistar_proposition}.   
We are also grateful to D. Blasi, R. \v{C}ech, M. Christiansen, J. M. Fontana, S. Frank, E. Gibson, H. Liu and G. Morrill for helpful comments or discussions. All remaining errors are entirely ours. This work was supported by the grant \emph{Iniciaci\'o i reincorporaci\'o a la recerca} from the Universitat Polit\`ecnica de Catalunya and the grants BASMATI (TIN2011-27479-C04-03) and OpenMT-2 (TIN2009-14675-C03) from the Spanish Ministry of Science and Innovation.
\end{acknowledgement}
\section*{Appendix}
\addcontentsline{toc}{section}{Appendix}
%
%

\subsection*{Tree Reduction}

As any tree of at least two vertices has at least two leaves \cite[p. 11]{Bollobas1998}, a tree of $n+1$ vertices (with $n > 1$) yields a reduced tree of $n$ vertices by removing one of its leaves. Notice that such a reduction from a tree of $n+1$ to a tree of $n$ nodes will never produce a disconnected graph.

Consider that a tree has a leaf that is attached to a vertex of degree $k$. Then, the sum of squared degrees of a tree of $n+1$ vertices, i.e. $K_2(n+1)$, can be expressed as a function of $K_2(n)$, the sum of squared degrees of a reduced tree of $n$ vertices, i. e.
\begin{eqnarray} 
K_2(n+1) & = & K_2(n) + k^2 - (k-1)^2 + 1 \nonumber \\
         & = & K_2(n) + 2k \label{tree_reduction_equation}
\end{eqnarray}
with $n \geq 0$.

\subsection*{The Only Tree that Has Degree Second Moment Greater than that of a Quasi-star Tree is a Star Tree. }

A quasi-star tree is a tree with one vertex of degree $n - 2$, one vertex of degree 2 and the remainder of vertices of degree 1. As a tree must be connected, that tree needs $n > 2$. The sum of squared degrees of a quasi-star tree is
\begin{eqnarray}
K_2^\text{quasi}(n) = (n-2)^2 + 4 + n - 2 = n^2 - 3n + 6.
\label{quasi_star_equation}
\end{eqnarray}
The sum of squared degrees of a star tree is \cite{Ferrer2013b}
\begin{eqnarray}
K_2^\text{star}(n) = n(n - 1).
\label{star_equation}
\end{eqnarray}
$K_2^\text{star}(n)$ is an upper bound of $K_2^\text{quasi}(n)$, more precisely, 
\begin{proposition}
  \label{quasistar_versus_star_proposition}
  For all $n\ge 3$, 
  \begin{align}\label{eq:toprove0}
    K_2^{\text{quasi}}(n) \leq K_2^{\text{star}}(n)  
  \end{align}
  with equality if and only if $n = 3$.
\end{proposition}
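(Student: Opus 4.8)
The plan is to work directly from the two closed-form expressions that are already established in the excerpt: $K_2^{\text{quasi}}(n) = n^2 - 3n + 6$ (Eq.~\ref{quasi_star_equation}) and $K_2^{\text{star}}(n) = n(n-1) = n^2 - n$ (Eq.~\ref{star_equation}). Since both quantities are given explicitly as polynomials in $n$, the cleanest route is simply to form their difference and analyze its sign, rather than to argue combinatorially about degree sequences.

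First I would compute
\begin{equation}
K_2^{\text{star}}(n) - K_2^{\text{quasi}}(n) = (n^2 - n) - (n^2 - 3n + 6) = 2n - 6 = 2(n-3).
\end{equation}
From this single identity the whole statement follows: the right-hand side is nonnegative exactly when $n \geq 3$, which establishes the inequality~\eqref{eq:toprove0} on the stated range, and it vanishes precisely when $n = 3$, which gives the equality condition. For $n > 3$ the difference $2(n-3)$ is strictly positive, so the inequality is strict there.

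The main obstacle here is essentially nonexistent once the two polynomial forms are in hand, because the difference is linear in $n$ and its sign is immediate. The only point requiring a word of care is that the proposition is asserted for the domain $n \geq 3$, which is exactly the range on which the quasi-star tree is well defined (it requires $n > 2$); this ensures the two formulas being subtracted are both meaningful, so there is no edge case to check separately. I would therefore close the argument by noting that $2(n-3) \geq 0$ for $n \geq 3$, with equality if and only if $n = 3$, which is precisely the claim.
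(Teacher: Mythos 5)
Your proof is correct and follows essentially the same route as the paper: both substitute the closed forms $K_2^{\text{quasi}}(n) = n^2 - 3n + 6$ and $K_2^{\text{star}}(n) = n(n-1)$ and reduce the inequality to $n \geq 3$ by elementary algebra. Your explicit computation of the difference as $2(n-3)$ makes the equality case slightly more transparent, but there is no substantive difference.
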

\begin{proof}
Applying the definitions in Eqs.~\ref{quasi_star_equation} and \ref{star_equation} to Eq.~\ref{eq:toprove0} we obtain 
\begin{equation} 
n^2 - 3n + 6 \leq n(n-1), 
\end{equation}
that is $n \geq 3$. 
\qed  
\end{proof}
$K_2(n)$ is maximized by star trees \cite{Ferrer2013b}. Quasi-star trees yield the second largest possible value of $K_2(n)$, more precisely  
\begin{proposition}
  \label{above_quasistar_proposition}
  For all $n\ge 3$, it holds that
  \begin{align}\label{eq:toprove}
    K_2(n)>K_2^\text{quasi}(n) \implies
    K_2(n)=K_2^\text{star}(n) 
  \end{align}
  for any tree with $n$ vertices. 
\end{proposition}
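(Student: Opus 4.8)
The plan is to prove the equivalent, cleaner statement that \emph{every non-star tree on $n$ vertices satisfies $K_2(n)\le K_2^{\text{quasi}}(n)$}. This is equivalent to the displayed implication: since $K_2$ is maximized uniquely by the star \cite{Ferrer2013b}, a tree with $K_2(n)>K_2^{\text{quasi}}(n)$ cannot be a non-star tree, hence must be a star and attain $K_2^{\text{star}}(n)$. I would then argue by induction on $n$, exploiting the tree-reduction identity of Eq.~\ref{tree_reduction_equation}, namely $K_2(n+1)=K_2(n)+2k$, where $k$ is the degree (in the larger tree) of the vertex carrying the deleted leaf.

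For the base case I would take $n=3$: the only tree is the path $P_3$, which is itself a star, so there are no non-star trees and the claim is vacuous; the step then covers everything from $n=4$ onward (where, e.g., the sole non-star tree $P_4$ is exactly the quasi-star, giving equality via Eqs.~\ref{quasi_star_equation}--\ref{star_equation}). The one preliminary computation I would record is $K_2^{\text{quasi}}(n+1)-K_2^{\text{quasi}}(n)=2(n-1)$, obtained directly from Eq.~\ref{quasi_star_equation}.

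For the inductive step I would take a non-star tree $T$ on $n+1$ vertices, pick \emph{any} leaf $\ell$ with neighbor $v$ of degree $k=\deg_T(v)$, and set $T'=T-\ell$. If $T'$ is not a star, the induction hypothesis gives $K_2(T')\le K_2^{\text{quasi}}(n)$; because $T$ is non-star its maximum degree is at most $n-1$, so $k\le n-1$, and Eq.~\ref{tree_reduction_equation} yields $K_2(T)=K_2(T')+2k\le K_2^{\text{quasi}}(n)+2(n-1)=K_2^{\text{quasi}}(n+1)$. If instead $T'$ is a star with center $c$, then $v\ne c$ (else $\deg_T(c)=n$ and $T$ would be a star), so $v$ is a leaf of $T'$ and $\deg_T(v)=2$ while $\deg_T(c)=n-1$; thus $T$ has exactly one vertex of degree $n-1$, one of degree $2$, and the rest of degree $1$, i.e. $T$ is the quasi-star and $K_2(T)=K_2^{\text{quasi}}(n+1)$ with equality.

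The \emph{main obstacle} is precisely this second case: one must check that a single leaf deletion producing a star forces $T$ itself to be the quasi-star, which is what guarantees that no choice of leaf can push $K_2(T)$ above the quasi-star value. Everything else is routine once the recurrence $K_2^{\text{quasi}}(n+1)=K_2^{\text{quasi}}(n)+2(n-1)$ and the bound $k\le n-1$ for non-star trees are in place. As a sanity check and alternative, one could bypass the induction with a Schur-convexity argument on degree sequences: every positive-integer sequence of length $n$ summing to $2(n-1)$ is tree-realizable, the transfer $(a,b)\mapsto(a+1,b-1)$ strictly increases $\sum_i k_i^2$ whenever $a\ge b\ge 2$, so the maximum is the star $(n-1,1,\dots,1)$ and, excluding it, the quasi-star $(n-2,2,1,\dots,1)$.
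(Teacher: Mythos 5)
Your proof is correct, and it runs on the same engine as the paper's: induction on $n$ via deletion of a leaf, powered by the identity $K_2(n+1)=K_2(n)+2k$ of Eq.~\ref{tree_reduction_equation} and the recurrence $K_2^{\text{quasi}}(n+1)=K_2^{\text{quasi}}(n)+2(n-1)$. The difference is in how the cases are cut. The paper takes the implication itself as the inductive statement and splits on whether the reduced tree satisfies the antecedent $L(n)$; its first sub-case is exactly your Case B (reduced tree is a star, leaf attached to hub versus to a leaf), while its second sub-case combines the two numerical inequalities to force $k=n$ and then closes with a contradiction that invokes Proposition \ref{quasistar_versus_star_proposition}. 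You instead make the contrapositive the inductive statement (every non-star tree has $K_2\le K_2^{\text{quasi}}$) and split on whether $T'=T-\ell$ is a star. This buys two small simplifications: in your Case A the bound $k\le n-1$ comes for free from $T$ being non-star, so no inequality-juggling is needed, and the induction never has to call on the auxiliary comparison of star and quasi-star values. You also correctly identify the crux — that deleting a leaf from a non-star $T$ and landing on a star forces $T$ to be the quasi-star — which is the same pivot the paper's proof turns on. Your closing Schur-convexity sketch (transfers on degree sequences summing to $2(n-1)$) is a genuinely different, induction-free route, though as stated it would still need the check that a transfer applied to a non-star, non-quasi-star sequence does not jump straight past the quasi-star to the star.
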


\begin{proof}
  Denote the antecedent of the implication in Eq.~\ref{eq:toprove}
  by $L(n)$ and the consequent by $R(n)$. We show by induction that,
  for all $n\ge 3$, 
  $L(n)\implies R(n)$.

  For $n=3$, the fact that the only possible tree is both a star and a quasi-star tree implies that $L(n)$ is false. Thus, Eq.~\ref{eq:toprove} holds trivially. 

  Let $n>3$. For the induction step, assume that
  $L(n)\implies R(n)$, and 
  also assume that $L(n+1)$ holds. We must show that then also $R(n+1)$
  holds. 

  Consider an arbitrary tree $T_{n+1}$ with $n+1$ vertices and consider the
  tree $T_n$, on $n$ vertices, with 
  a leaf $l$ removed from 
  $T_{n+1}$. 

  If $L(n)$ holds, then, 
  by the induction hypothesis, $R(n)$ holds, i.e., $
  K_2(n)=K_2^{\text{star}}(n)$. A tree with $n$ vertices for
  which $R(n)$ holds must be a star tree \cite{Ferrer2013d}; thus, $T_{n}$ is a star
  tree. Then, the leaf 
  vertex $l$ is  
  \begin{itemize}
  \item
  either attached to the hub of the star tree, in which
  case the 
  resulting tree $T_{n+1}$ is also a star tree, so that
  $K_{2}(n+1)=K_2^{\text{star}}(n+1)$, i.e., $R(n+1)$, 
  holds. 
  \item 
  or attached to a leaf of
  $T_n$, in which case $T_{n+1}$ is a quasi-star tree, contradicting that
  $L(n+1)$ holds. 
  \end{itemize}
  Conversely, if $L(n)$ does not hold, then $K_2(n)\le
  K_2^{\text{quasi}}(n)$. Accordingly, a tree $T_{n+1}$ with a leaf of degree $k$ satisfies (for any $k$) 
  \begin{align}
    K_2(n+1) = K_2(n)+2k \le K_2^{\text{quasi}}(n)+2k = n^2-3n+6+2k,
    \label{upper_bound_equation}  
  \end{align}
  being $K_2(n)$ the sum of squared degrees of the reduced tree.
  Now, we have assumed that $L(n+1)$ holds, i.e., that
  \begin{align}
    K_2(n+1)>K_2^{\text{quasi}}(n+1)= n^2-n+4,
    \label{lower_bound_equation}   
  \end{align}
  thanks to Eq.~\ref{quasi_star_equation}.
  Combining Eqs.~\ref{upper_bound_equation} and \ref{lower_bound_equation} it is obtained 
  \begin{align*}
    n^2-n+4<K_2(n+1)\le n^2-3n+6+2k,
  \end{align*}
  which implies 
  \begin{align*}
    2n< 2+2k.
  \end{align*}
  But this would require that $k>n-1$, that is, $k=n$, 
  since
  the maximum degree of a vertex in a tree with $n+1$ vertices is
  $n$. In other words, $T_{n+1}$ 
  would be forced to have a vertex of degree $k=n$, whence $T_{n+1}$
  is a star tree, so that $T_n$ also is a star tree (removing a leaf
  from a star tree yields a star tree). But, this would contradict
  that $L(n)$ does not hold 
  since $K_2^{\text{star}}(n) > K_2^{\text{quasi}}(n)$ for
  $n>3$ (Proposition \ref{quasistar_versus_star_proposition}). 
  \qed 
\end{proof}

\bibliographystyle{spphys}
\bibliography{biblio}

\begin{thebibliography}{10}
\providecommand{\url}[1]{{#1}}
\providecommand{\urlprefix}{URL }
\expandafter\ifx\csname urlstyle\endcsname\relax
  \providecommand{\doi}[1]{DOI \discretionary{}{}{}#1}\else
  \providecommand{\doi}{DOI \discretionary{}{}{}\begingroup
  \urlstyle{rm}\Url}\fi

\bibitem{McDonald2005a}
R.~McDonald, F.~Pereira, K.~Ribarov, J.~Haji\v{c}, in \emph{Proceedings of the
  Conference on Human Language Technology and Empirical Methods in Natural
  Language Processing} (Association for Computational Linguistics, Stroudsburg,
  PA, USA, 2005), HLT '05, pp. 523--530.
\newblock \doi{10.3115/1220575.1220641}

\bibitem{Miller1963}
G.A. Miller, N.~Chomsky, in \emph{Handbook of Mathematical Psychology}, vol.~2,
  ed. by R.D. Luce, R.~Bush, E.~Galanter (Wiley, New York, 1963), pp. 419--491

\bibitem{Chomsky1965}
N.~Chomsky, \emph{Aspects of the Theory of Syntax} (MIT Press, Cambridge, MA,
  1965)

\bibitem{Jackendoff2002a}
R.~Jackendoff, \emph{Foundations of Language} (Oxford University Press, Oxford,
  1999)

\bibitem{Newmeyer2001a}
F.~Newmeyer, Journal of Linguistics \textbf{37}, 101 (2001).
\newblock \doi{10.1017/S0022226701008593}

\bibitem{Newmeyer2003a}
F.J. Newmeyer, Language \textbf{79}, 682 (2003).
\newblock \doi{10.1353/lan.2003.0260}

\bibitem{Hurford2012_Chapter3}
J.R. Hurford, \emph{Chapter 3. Syntax in the Light of Evolution} (Oxford
  University Press, Oxford, 2012), pp. 175--258

\bibitem{Hawkins2004a}
J.A. Hawkins, \emph{Efficiency and Complexity in Grammars} (Oxford University
  Press, Oxford, 2004)

\bibitem{Hopper1998a}
P.J. Hopper, in \emph{The New Psychology of Language: Cognitive and Functional
  Approaches to Language Structure}, ed. by M.~Tomasello (Lawrence Erlbaum,
  Mahwah, NJ, 1998), pp. 155--175

\bibitem{Koehler2005a}
R.~K{\"o}hler, in \emph{Quantitative Linguistik. Ein internationales Handbuch.
  Quantitative Linguistics: An International Handbook} (Walter de Gruyter,
  Berlin/New York, 2005), pp. 760--775

\bibitem{Manning2002a}
C.D. Manning, in \emph{Probabilistic Linguistics}, ed. by R.~Bod, J.~Hay,
  S.~Jannedy (MIT Press, Cambridge, MA, 2002), pp. 289--341

\bibitem{Christiansen1999a}
M.H. Christiansen, N.~Chater, Cognitive Science \textbf{23}(2), 157 (1999).
\newblock \doi{10.1207/s15516709cog2302_2}

\bibitem{Burnham2002a}
K.P. Burnham, D.R. Anderson, \emph{Model Selection and Multimodel Inference. A
  Practical Information-theoretic Approach}, 2nd edn. (Springer, New York,
  2002)

\bibitem{Chomsky1995}
N.~Chomsky, \emph{The Minimalist Program} (MIT Press, 1995)

\bibitem{Hauser2002}
M.D. Hauser, N.~Chomsky, W.T. Fitch, Science \textbf{298}, 1569 (2002).
\newblock \doi{10.1126/science.298.5598.1569}

\bibitem{Hudson2007a}
R.~Hudson, \emph{Language Networks. The New Word Grammar} (Oxford University
  Press, Oxford, 2007)

\bibitem{Frank2012a}
S.~Frank, R.~Bod, M.H. Christiansen, Proceedings of the Royal Society B:
  Biological Sciences \textbf{279}, 4522–4531 (2012).
\newblock \doi{10.1098/rspb.2012.1741}

\bibitem{Miller1968a}
G.A. Miller, in \emph{The Psycho-Biology of Language: an Introduction to
  Dynamic Psychology (by G. K. Zipf)} (MIT Press, Cambridge, MA, USA, 1968),
  pp. v--x

\bibitem{Niyogi1995a}
P.~Niyogi, R.C. Berwick, A.I. Memo No. 1530 / C.B.C.L. Paper No. 118  (1995)

\bibitem{Suzuki2004a}
R.~Suzuki, P.L. Tyack, J.~Buck, Anim. Behav. \textbf{69}, 9 (2005).
\newblock \doi{10.1016/j.anbehav.2004.08.004}

\bibitem{Lecerf1960a}
Y.~Lecerf, Rapport CETIS No. 4 pp. 1--24 (1960).
\newblock Euratom

\bibitem{Hays1964}
D.~Hays, Language \textbf{40}, 511 (1964)

\bibitem{Chen2009a}
W.Y.C. Chen, H.S.W. Han, C.M. Reidys, Proceedings of the National Academy of
  Sciences \textbf{106}(52), 22061 (2009).
\newblock \doi{10.1073/pnas.0907269106}

\bibitem{Ferrer2012d}
R.~{Ferrer-i-Cancho}, A.~Hern\'{a}ndez-Fern\'{a}ndez, D.~Lusseau,
  G.~Agoramoorthy, M.J. Hsu, S.~Semple, Cognitive Science \textbf{37}(8),
  1565–1578 (2013).
\newblock \doi{10.1111/cogs.12061}

\bibitem{Ferrer2004b}
R.~{Ferrer-i-Cancho}, Physical Review E \textbf{70}, 056135 (2004).
\newblock \doi{10.1103/PhysRevE.70.056135}

\bibitem{Ferrer2011c}
R.~{Ferrer-i-Cancho}, F.~{Moscoso del Prado Mart\'{i}n}, Journal of Statistical
  Mechanics p. L12002 (2011).
\newblock \doi{10.1088/1742-5468/2013/07/L07001}

\bibitem{Moscoso2013a}
F.~{Moscoso del Prado}, in \emph{Proceedings of the 35th Annual Conference of
  the Cognitive Science Society}, ed. by M.~Knauff, M.~Pauen, N.~Sebanz,
  I.~Wachsmuth (Cognitive Science Society, Austin, TX, 2013), pp. 1032--1037

\bibitem{Ferrer2009b}
R.~{Ferrer-i-Cancho}, B.~Elvev{\aa}g, PLoS ONE \textbf{5}(4), e9411 (2009).
\newblock \doi{10.1371/journal.pone.0009411}

\bibitem{Piantadosi2011a}
S.T. Piantadosi, H.~Tily, E.~Gibson, Proceedings of the National Academy of
  Sciences \textbf{108}(9), 3526 (2011)

\bibitem{Uriagereka1998}
J.~Uriagereka, \emph{Rhyme and Reason. An Introduction to Minimalist Syntax}
  (The MIT Press, Cambridge, Massachusetts, 1998)

\bibitem{Hudson1984}
R.~Hudson, \emph{Word Grammar} (Blackwell, Oxford, 1984)

\bibitem{Melcuk1988}
I.~Mel'\v{c}uk, \emph{Dependency Syntax: Theory and Practice} (State of New
  York University Press, Albany, 1988)

\bibitem{Bollobas1998}
B.~Bollob\'as, \emph{Modern Graph Theory}.
\newblock Graduate Texts in Mathematics (Springer, New York, 1998)

\bibitem{Ferrer2003a}
R.~{Ferrer i Cancho}, R.V. Sol\'e, in \emph{Statistical Mechanics of Complex
  Networks}, \emph{Lecture Notes in Physics}, vol. 625, ed. by
  R.~Pastor-Satorras, J.~Rub\'i, A.~D\'iaz-Guilera (Springer, Berlin, 2003),
  pp. 114--125.
\newblock \doi{10.1007/b12331}

\bibitem{Goldberg2003a}
A.E. Goldberg, Trends in Cognitive Sciences \textbf{7}(5), 219  (2003).
\newblock \doi{10.1016/S1364-6613(03)00080-9}

\bibitem{GomezRodriguez2014a}
C.~G\'omez-Rodr{\'i}guez, D.~Fern{\'a}ndez-Gonz{\'a}lez, V.M.D. Bilbao,
  Computational Intelligence  (2014).
\newblock \doi{10.1111/coin.12027}

\bibitem{Noy1998a}
M.~Noy, Discrete Mathematics \textbf{180}, 301 (1998).
\newblock \doi{10.1016/S0012-365X(97)00121-0}

\bibitem{Ferrer2013d}
R.~{Ferrer-i-Cancho}, http://arxiv.org/abs/1305.4561  (2013)

\bibitem{Ferrer2013b}
R.~{Ferrer-i-Cancho}, Glottometrics \textbf{25}, 1 (2013)

\bibitem{Aldous1990a}
D.~Aldous, SIAM J. Disc. Math. \textbf{3}, 450 (1990).
\newblock \doi{10.1137/0403039}

\bibitem{Broder1989a}
A.~Broder, in \emph{Symp. Foundations of Computer Sci., IEEE} (New York, 1989),
  pp. 442--447

\bibitem{Moon1970a}
J.~Moon, in \emph{Canadian Math. Cong.} (1970)

\bibitem{Liu2010a}
H.~Liu, Lingua \textbf{120}(6), 1567 (2010).
\newblock \doi{10.1016/j.lingua.2009.10.001}

\bibitem{OEIS_A000055}
N.J.A. Seoane, \emph{Number of Trees with $n$ Unlabeled Nodes} (2013).
\newblock Available at http://oeis.org/A000055

\bibitem{Sole2010a}
R.V. Sol\'e, Complexity \textbf{16}(1), 20 (2010).
\newblock \doi{10.1002/cplx.20326}

\bibitem{Ferrer2012f}
R.~{Ferrer-i-Cancho}, N.~Forns, A.~Hern\'andez-Fern\'andez, G.~Bel-Enguix,
  J.~Baixeries, Complexity \textbf{18}(3), 11 (2013).
\newblock \doi{10.1002/cplx.21429}

\bibitem{Liu2008a}
H.~Liu, Journal of Cognitive Science \textbf{9}, 159 (2008)

\bibitem{Ferrer2013g}
R.~{Ferrer-i-Cancho}, http://arxiv.org/abs/1310.5884  (2013)

\bibitem{Ferrer2006d}
R.~{Ferrer-i-Cancho}, Europhysics Letters \textbf{76}(6), 1228 (2006).
\newblock \doi{10.1209/epl/i2006-10406-0}

\bibitem{Morrill2009a}
G.~Morrill, O.~Valent{\'i}n, M.~Fadda, in \emph{Logic, Language, and
  Computation}, \emph{Lecture Notes in Computer Science}, vol. 5422, ed. by
  P.~Bosch, D.~Gabelaia, J.~Lang (Springer Berlin Heidelberg, 2009), pp.
  272--286.
\newblock \doi{10.1007/978-3-642-00665-4_22}

\bibitem{Ferrer2014a}
R.~{Ferrer-i-Cancho}, in \emph{{THE EVOLUTION OF LANGUAGE - Proceedings of the
  10th International Conference (EVOLANG10)}}, ed. by E.A. Cartmill,
  S.~Roberts, H.~Lyn, H.~Cornish (Wiley, Vienna, Austria, 2014), pp. 66--73.
\newblock \doi{10.1142/9789814603638_0007}.
\newblock Evolution of Language Conference (Evolang 2014), April 14-17

\bibitem{Ferrer2013e}
R.~{Ferrer-i-Cancho}, Language Dynamics and Change \textbf{4}, in press (2015).
\newblock \urlprefix\url{http://arxiv.org/abs/1309.1939}

\bibitem{Chung1984}
F.R.K. Chung, Comp. \& Maths. with Appls. \textbf{10}(1), 43 (1984).
\newblock \doi{10.1016/0898-1221(84)90085-3}

\bibitem{Ferrer2013c}
R.~{Ferrer-i-Cancho}, H.~Liu, Glottotheory \textbf{5}, 143 (2014).
\newblock \doi{10.1515/glot-2014-0014}

\bibitem{Morrill2000a}
G.~Morrill, Computational Linguistics \textbf{25}(3), 319 (2000).
\newblock \doi{10.1162/089120100561728}

\bibitem{Hawkins1998a}
J.A. Hawkins, in \emph{Constituent Order in the Languages of {Europe}}, ed. by
  A.~Siewierska (Mouton de Gruyter, Berlin, 1998)

\bibitem{Levy2012a}
R.~Levy, E.~Fedorenko, M.~Breen, E.~Gibson, Cognition \textbf{122}(1), 12
  (2012).
\newblock \doi{10.1016/j.cognition.2011.07.012}

\bibitem{Gibson2010a}
E.~Gibson, E.~Fedorenko, Trends in Cognitive Sciences \textbf{14}(6), 233
  (2010).
\newblock \doi{10.1016/j.tics.2010.03.005}

\bibitem{Culicover2010a}
P.W. Culicover, R.~Jackendoff, Trends in Cognitive Sciences \textbf{14}(6), 234
  (2010).
\newblock \doi{10.1016/j.tics.2010.03.012}

\bibitem{Baronchelli2013a}
A.~Baronchelli, R.~{Ferrer-i-Cancho}, R.~Pastor-Satorras, N.~Chatter,
  M.~Christiansen, Trends in Cognitive Sciences \textbf{17}, 348 (2013).
\newblock \doi{10.1016/j.tics.2013.04.010}

\bibitem{Hochberg2003a}
R.A. Hochberg, M.F. Stallmann, Information Processing Letters \textbf{87}, 59
  (2003).
\newblock \doi{10.1016/S0020-0190(03)00261-8}

\bibitem{Liu2007a}
H.~Liu, Glottometrics \textbf{15}, 1 (2007)

\bibitem{Gell-Mann2011a}
M.~Gell-Mann, M.~Ruhlen, Proceedings of the National Academy of Sciences USA
  \textbf{108}(42), 17290 (2011).
\newblock \doi{10.1073/pnas.1113716108}

\bibitem{Gibson2000}
E.~Gibson, in \emph{Image, Language, Brain} (The MIT Press, Cambridge, MA,
  2000), pp. 95--126

\bibitem{Hawkins1994}
J.A. Hawkins, \emph{A Performance Theory of Order and Constituency} (Cambridge
  University Press, New York, 1994)

\bibitem{Molloy1995a}
M.~Molloy, B.~Reed, Random Structures and Algorithms \textbf{6}, 161 (1995).
\newblock \doi{10.1002/rsa.3240060204}

\bibitem{Molloy1998a}
M.~Molloy, B.~Reed, Combinatorics, Probability and Computing \textbf{7}, 295
  (1998).
\newblock \doi{10.1017/S0963548398003526}

\bibitem{Newman2001d}
M.E.J. Newman, S.H. Strogatz, D.J. Watts, Phys. Rev. E \textbf{64}, 026118
  (2001).
\newblock \doi{10.1103/PhysRevE.64.026118}

\bibitem{Gibson2004a}
E.~Gibson, T.~Warren, Syntax \textbf{7}, 55 (2004).
\newblock \doi{10.1111/j.1368-0005.2004.00065.x}

\bibitem{deVries2012a}
M.H. de~Vries, K.M. Petersson, S.~Geukes, P.~Zwitserlood, M.H. Christiansen,
  Philosophical Transactions of the Royal Society B: Biological Sciences
  \textbf{367}(1598), 2065 (2012).
\newblock \doi{10.1098/rstb.2011.0414}

\bibitem{Bunge2013a}
M.~Bunge, \emph{La ciencia. Su m\'etodo y su filosof\'ia} (Laetoli, Pamplona,
  2013)

\bibitem{Zipf1949a}
G.K. Zipf, \emph{Human Behaviour and the Principle of Least Effort}
  (Addison-Wesley, Cambridge (MA), USA, 1949)

\bibitem{Bach1986a}
E.~Bach, C.~Brown, W.~Marslen-Wilson, Language and Cognitive Processes
  \textbf{1}, 249 (1986).
\newblock \doi{10.1080/01690968608404677}

\bibitem{Evans2009a}
N.~Evans, S.C. Levinson, Behavioral and Brain Sciences \textbf{32}, 429 (2009).
\newblock \doi{10.1017/S0140525X0999094X}

\bibitem{Piantadosi2013a}
S.~Piantadosi, E.~Gibson, Cognitive Science \textbf{38}(4), 736 (2014).
\newblock \doi{10.1111/cogs.12088}

\end{thebibliography}

\end{document}